\documentclass{article}

\usepackage[preprint]{neurips_2026}

\usepackage{mathtools}
\usepackage[utf8]{inputenc} 
\usepackage[T1]{fontenc}    
\usepackage{hyperref}       
\usepackage{url}            
\usepackage{booktabs}       
\usepackage{amsfonts}       
\usepackage{nicefrac}       
\usepackage{microtype}      
\usepackage{xcolor}         
\definecolor{gold}{RGB}{255,192,0}
\definecolor{blue}{RGB}{68,114,196}
\definecolor{pink}{RGB}{241,158,155}
\definecolor{cyan}{RGB}{97,211,175}
\usepackage{caption}
\usepackage{graphicx}
\usepackage{physics}
\usepackage{enumitem}
\usepackage{amssymb}
\usepackage{amsthm}
\usepackage{wrapfig}
\usepackage{algorithm}
\usepackage{algorithmic}
\usepackage{tabularx}
\usepackage{makecell}
\usepackage{multirow}
\usepackage{multicol}
\theoremstyle{plain}
\newtheorem{theorem}{Theorem}[section]

\newtheorem{lemma}[theorem]{Lemma}

\theoremstyle{definition}
\newtheorem{definition}[theorem]{Definition}

\theoremstyle{remark}

\newcommand{\shorteq}[1]{\mathrel{\makebox[#1][c]{=}}}
\newcolumntype{Y}{>{\centering\arraybackslash}X}

\title{Geometry-Calibrated Conformal Abstention for Language Models}

\author{%
  Rui Xu,\,  Yi Chen,\,  Sihong Xie,\,  Hui Xiong\\
  Information Hub, AI Thrust\\
  Hong Kong University of Science and Technology (Guangzhou)\\
  Guangzhou, Guangdong, China\\
}

\begin{document}

\maketitle

\begin{abstract}
When language models lack relevant knowledge for a given query, they frequently generate plausible responses that can be hallucinations, rather than admitting being agnostic about the answer.
Retraining models to reward admitting ignorance can lead to overly conservative behaviors and poor generalization due to scarce evaluation benchmarks.
We propose a post-hoc framework, \emph{Conformal Abstention} (CA), adapted from conformal prediction (CP) to determine whether to abstain from answering a query. CA provides finite-sample guarantees on both the probability of participation (i.e., not abstaining) and the probability that the generated response is correct. Importantly, the abstention decision relies on prediction confidence rather than the non-conformity scores used in CP, which are intractable for open-ended generation. To better align prediction confidence with the model’s ignorance, we introduce a calibration strategy using representation geometry within the model to measure knowledge involvement in shaping the response.
Experiments demonstrate we improve selective answering significantly with 75$\%$ conditional correctness.
\end{abstract}

\section{Introduction}
Advances in model architectures~\cite{vaswani2017attention} have endowed language models with broad linguistic competence. Leveraging the ability, large language models (LLMs) have become central components of artificial intelligence, serving as general-purpose engines for reasoning~\cite{stechly2024chain}, knowledge retrieval~\citep{zhang2024comprehensive}, and decision support~\cite{eigner2024determinants}, with growing deployment in science~\cite{reddy2025towards}, healthcare~\cite{yang2024talk2care}, and education~\cite{neumann2024llm}.

Despite their remarkable capabilities, LLMs should refuse to respond in certain situations, such as queries involving safety~\cite{mu2024rulebasedrewardslanguage} or privacy~\cite{wang2025unveiling}. Beyond externally imposed constraints, LLMs must also recognize the limits of their own knowledge~\cite{wen2025know}. When faced with questions for which they do not know the answer, models tend to generate responses without reliable grounding, producing responses that appear plausible but are in fact hallucinated. This behavior raises persistent concerns about reliability in real-world deployments~\cite{huang2025survey}. Importantly, this phenomenon is not solely due to limitations in model capacity, but is closely tied to commonly used training loss functions~\cite{kalai2025languagemodelshallucinate}. Most evaluation protocols reward models only for producing an answer, typically measuring performance through accuracy. Consequently, abstention (e.g., by responding "I don't know") receives zero or even negative rewards, while guessing always offers a nonzero chance of success, making guessing more rewarding than abstaining for all queries.   
Recent work suggests that rewarding appropriate expressions of ignorance, rather than penalizing them, may help mitigate this issue~\cite{li2025verifiableaccuracyabstentionrewards}. 

However, the scarcity of benchmarks that explicitly reward abstention causes models to be overfitting, which in turn degrades their generalization ability.~\cite{lin2024wildbenchbenchmarkingllmschallenging}. Moreover, excessively rewarding model refusal can induce indiscriminate abstention~\cite{kalai2025languagemodelshallucinate}.
As a result, promoting abstention through training-based interventions is can be fragile.
These considerations motivate a central question:
\begin{center}
\vspace{-5pt}
\textit{Can LLMs abstain using a post-hoc mechanism?}
\vspace{-5pt}
\end{center}
We propose \textbf{Conformal Abstention (CA)}, an approach grounded in conformal prediction (CP)~\cite{vovk2005algorithmic} that enables LLMs to admit a lack of knowledge. Specifically, CA establishes two theoretical finite-sample guarantees: (i) a \textit{participation guarantee}, bounding the probability that the model participates (i.e., an LLM provides an informative answer) rather than saying “I don’t know”; and (ii) a \textit{conditional correctness guarantee}, which bounds the probability that the provided answer is correct. The abstention decision is based on prediction confidence, avoiding the nonconformity scores used in CP, which are empirically intractable for open-ended generation~\cite{campos2024conformal}.
\begin{figure}[h]
\centering
\captionsetup{singlelinecheck = false, skip=5pt, justification=justified}
  \includegraphics[scale=0.40]{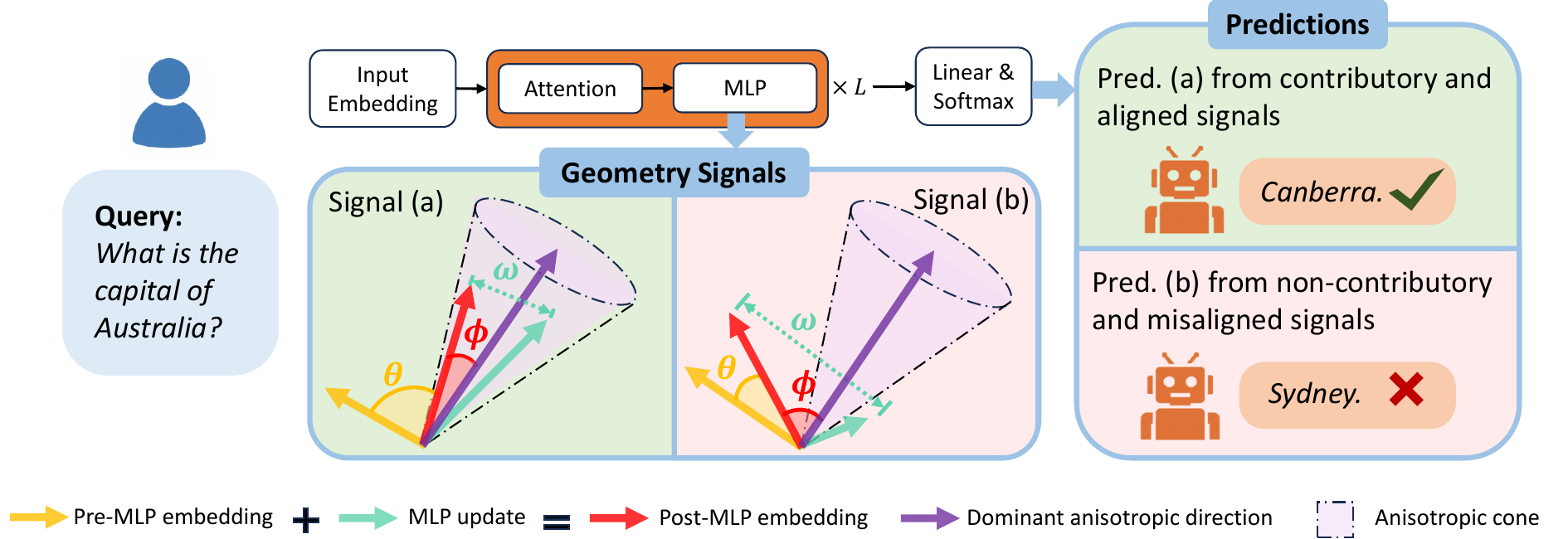}
  \caption{
  Token representation geometry captures how the model's internal knowledge shapes its response.}
  \label{fig: case study} 
  \vspace{-10pt}
\end{figure}

To derive prediction confidence that faithfully reflects when the model is truly uncertain, we analyze how internal knowledge influences the generation process through the trajectory of token representations across layers. 
Specifically, we focus on the update induced by the MLP, which is known to encode the model’s factual knowledge. As illustrated in Figure~\ref{fig: case study}, we quantify this influence using three geometric signals:
\begin{enumerate}[topsep=0pt, itemsep=1pt, leftmargin=15pt]
    \item \textcolor{cyan}{\textbf{proximity} $\omega$}, measuring the MLP's contribution via its distance to the post-MLP embedding;
    \item \textcolor{gold}{\textbf{embedding rotation} $\theta$}, capturing the extent to which the update alters the prediction direction;
    \item \textcolor{red}{\textbf{alignment} $\phi$}, measuring consistency with the dominant anisotropic direction.
\end{enumerate}
Intuitively, when the MLP update contributes strongly (small distance to the post-MLP embedding), induces a significant rotation, and aligns well with the dominant direction, the model is more likely to leverage useful internal knowledge, leading to a correct prediction (Figure~\ref{fig: case study} signal and pred. (a)). Conversely, weak contribution or misalignment indicates uncertainty, where the response is less reliable (Figure~\ref{fig: case study}  signal and pred. (b)). We leverage these geometric signals to calibrate confidence.

Across six datasets, the proposed method consistently outperforms a broad set of strong baselines, achieving the highest average conditional correctness of 75.0$\%$ under conformal abstention.
\section{Background}
\subsection{Conformal Prediction for Language Models}
We begin with a brief review of conformal prediction (CP)~\cite{angelopoulos2020uncertainty}. Throughout, we use uppercase letters (e.g.,$X$) to denote random variables, lowercase letters (e.g., $x$) to denote their realizations, and script letters (e.g., $\mathcal{X}$) to denote sets, unless stated otherwise. Given a model $f: \mathcal{X}\rightarrow\mathcal{Y}$, a score function $g:\mathcal{X}\times\mathcal{Y}\rightarrow\mathcal{S}\subseteq\mathbb{R}$ outputs non-conformity scores to assess how data conforms to the model $f$, with lower values indicating better conformity. For classification tasks, it can be defined as the negative predicted probability of the true label. With calibration instances $\{(X_i, Y_i)\}_{i=1}^n$, split conformal prediction computes calibration non-conformity scores $S_i := g(X_i, Y_i)$ for $i=1,\dots,n$~\citep{papadopoulos2002inductive}.  Let $\tau$ be the $\lceil(1-\alpha)(n+1)\rceil/n$ quantile of $\{S_i\}_{i=1}^n$.\footnote{\( \tau \) is the \( \lceil (1-\alpha)(n+1) \rceil \)-th smallest value among \( \{ S_i \}_{i=1}^n \).}
For a test instance $(X_{n+1}, Y_{n+1})$, the corresponding prediction set is
$
\mathcal{C}(X_{n+1}) := \{y \in \mathcal{Y} : g(X_{n+1}, y) \leq \tau\}
$. 
Assuming exchangeability, we can ensure
\begin{equation}\label{eq:marginal-guarantee}
    \mathbb{P}\big(Y_{n+1} \in \mathcal{C}(X_{n+1})\big) = \mathbb{P}(S_{n+1} \leq \tau) \geq 1-\alpha.
\end{equation}
However, in autoregressive language generation, ground-truth answers can take semantically equivalent forms, making it intractable to define the score function $g$ via likelihood over all valid outputs.

To address this, prior work restricts generation to structured settings such as multiple-choice classification~\cite{kumar2023conformal, ren2023robots, kostumov2024uncertainty} or regression-based grading~\cite{sheng2025analyzinguncertaintyllmasajudgeinterval}, which limits applicability to open-ended tasks. Another line of work approximates nonconformity via repeated sampling, but relies on the unrealistic assumption that the ground-truth answer is sampled~\cite{wang2024sample,wang2024conu}. When this assumption fails, valid coverage can only be achieved if 
$1-\alpha$ is below the fraction of calibration instances where the correct answer appears~\cite{su2024apienoughconformalprediction, wang2025sconuselectiveconformaluncertainty}.
Alternatively, some approaches redefine correctness using semantic entailment~\cite{mohri2024language} or admissibility~\cite{quach2023conformal}, weakening the connection between conformal guarantees and true correctness, as such criteria may still admit incomplete or misleading responses.

Abstention has recently emerged as a promising alternative direction. For example, Yadkori et al.~\cite{yadkori2024mitigatingllmhallucinationsconformal} adopt conformal risk control~\cite{angelopoulos2022conformal} to bound the joint probability of model participation (i.e., not abstaining) and incorrect prediction. However, since model participation is directly observable, the conditional probability of correctness given participation is a more useful and practically relevant than the joint probability. Our theoretical framework explicitly targets this conditional guarantee.
\subsection{Estimating Reliability from Internal Model Signals}
Reliability estimation based on surface-level signals, such as prediction probabilities~\cite{fomicheva2020unsupervised} or sampling-based consistency~\cite{kuhn2023semantic, wan-etal-2025-reasoning}, is often insufficient for LLMs. Recent work instead leverages internal signals, including attention spectra~\cite{sriramanan2024llm}, hidden-state statistics~\cite{chen2024insidellmsinternalstates}, and embedding distances~\cite{vazhentsev2025tokenleveldensitybaseduncertaintyquantification}. However, these approaches overlook how internal knowledge shapes the evolution of representations. When relevant knowledge is present, it should induce greater changes in hidden states. This suggests that the geometry of representation transformations provides a meaningful signal for reliability. We validate this hypothesis by leveraging representation geometry for calibration, leading to improved alignment between confidence and prediction correctness.

\section{Conformal Abstention}
\subsection{Participation Guarantee}
Consider a large language model \( f \) and input query \( x \). Let \( \widehat{y} := f(x) =(\widehat{y}[1],...,\widehat{y}[N])\) denote the generated output consisting of $N$ tokens. We associate each query $x$ with an \emph{uncertainty score} $u:=r(\widehat{y}|x)$ to reflect the reliability of the model’s response $\widehat{y}$. Here, $r$ denotes a user-specified uncertainty scoring function that outputs a scalar correlated with prediction correctness; several such functions have been proposed in prior work~\cite{fomicheva2020unsupervised, duan2024shiftingattentionrelevancepredictive}. A natural choice is to derive uncertainty from the model's confidence in its own generation. For instance, denoting $p(\cdot|x)$ the conditional token distribution induced by the model $f$ given $x$, we employ perplexity and define
\begin{equation}\label{eq: perplexity}
    r^{\mathrm{perp}}(\widehat{y}|x) = \exp\left(- \frac{1}{N} \sum\nolimits_{t=1}^{N} \log p(\widehat{y}[t] \mid x, \widehat{y}[<t])\right)
\end{equation}
where \( \widehat{y}[<t] \) is the prefix up to the token at \( t-1 \).

We then introduce a threshold on the uncertainty score and abstain from prediction on inputs whose scores exceed this threshold. Formally, given \( n \) calibration samples \( \{(X_i, Y_i)\}_{i=1}^n \), we obtain response $\widehat{Y}_i:=f(X_i)$ and compute uncertainty score $
U_i \coloneqq r(\widehat{Y}_i|X_i)$, for $i = 1,\dots,n$.
Let \( \tau \) be the $\lceil(1-\alpha)(n+1)\rceil/n$ quantile of \( \{ U_i\}_{i=1}^n \):
\begin{equation}\label{eq: threshold}
    \tau := \operatorname{Quantile}\left(\lceil(1-\alpha)(n+1)\rceil/n, \{ U_i \}_{i=1}^n \right).
\end{equation}
A \emph{kept region} is defined as $\mathcal{K} \coloneqq \{ x : r(x) \le \tau \}$.
The model participates (e.g., generates an informative answer) instead of saying "I do not know" for samples in $\mathcal{K}$. If a test sample \( (X_{n+1},Y_{n+1}) \) is exchangeable, and $U_1,...,U_n$ are almost surly distinct, standard conformal arguments~\cite{vovk2005algorithmic} yield a \textit{participation guarantee}:
\begin{equation}\label{eq: participation guarantee}
    \mathbb{P}(X_{n+1}\in\mathcal{K})=\mathbb{P}(U_{n+1}\leq\tau)\in \left[1-\alpha,1-\alpha+{1}/{n+1}\right).
\end{equation}
\subsection{Conditional Correctness Guarantee}
To determine whether a prediction $\widehat{y}$ for a prompt $x$ is correct, we define an \emph{evaluation function} $\xi$ with respect to the ground-truth answer $y$, whose realization is detailed in Appendix~\ref{appendix: correctness eval}. Specifically, we have
\[
\xi(x, y, \widehat{y}) :=
\begin{cases}
1, & \text{if $\widehat{y}$ is correct for $x$ with respect to $y$},\\
0, & \text{otherwise.}
\end{cases}
\]
For notational clarity, let $J := \xi(X,Y,\widehat{Y})$ be the random variable indicating prediction correctness.
We aim to bound the probability that a prediction is correct conditioned on model participation, i.e.,
\[
\mathbb{P}(J_{n+1} = 1 \mid X_{n+1} \in \mathcal{K}).
\]
To enable finite-sample guarantees, we establish that the test uncertainty score and correctness indicator $(U_{n+1},J_{n+1})$ are exchangeable with those of the calibration samples. This follows from Theorem~\ref{thm:exchangeability preservation} showing that exchangeability is preserved under symmetric deterministic transformations, leading directly to Lemma~\ref{lem: uq_exchangeable}.

\begin{theorem}[Exchangeability Preservation]~\cite{dean1990linear}\label{thm:exchangeability preservation}
    Let $(Z_1,...,Z_{n+1})\in\mathcal{Z}^{n+1}$ be a vector of exchangeable random variables. Fix a transformation $g:\mathcal{Z}^{n+1}\rightarrow(\mathcal{Z'})^{n+1}$. If for each permutation $\pi_1$, there exists a permutation $\pi_2$ such that $\pi_1g(z_1,...,z_{n+1})=g(\pi_2(z_1,...,z_{n+1}))$ $\forall (z_1,...,z_{n+1})\in\mathcal{Z}^{n+1}$, then $(Z'_1,...,Z'_{n+1}):=g(Z_1,...Z_{n+1})$ is a vector of exchangeable random variables. 
\end{theorem}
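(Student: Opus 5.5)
We argue directly from the definition: a vector is exchangeable if its joint law is invariant under every permutation of coordinates. Fix an arbitrary permutation $\pi_1$ of $\{1,\dots,n+1\}$. We need
\[
(Z'_{\pi_1(1)},\dots,Z'_{\pi_1(n+1)}) \overset{d}{=} (Z'_1,\dots,Z'_{n+1}),
\]
that is, $\pi_1 g(Z) \overset{d}{=} g(Z)$ with $Z:=(Z_1,\dots,Z_{n+1})$. We assume $g$ is measurable, which is implicit in calling $g(Z)$ a random vector.

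By hypothesis there is a permutation $\pi_2$, depending only on $\pi_1$ and not on the point $z$, with $\pi_1 g(z)=g(\pi_2 z)$ for every $z\in\mathcal{Z}^{n+1}$. Substituting the random vector gives the pointwise, hence almost-sure, identity $\pi_1 g(Z)=g(\pi_2 Z)$. Exchangeability of $Z$ gives $\pi_2 Z\overset{d}{=}Z$. Equality in distribution is preserved by measurable maps, so $g(\pi_2 Z)\overset{d}{=}g(Z)$.

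Chaining the two facts,
\[
\pi_1 g(Z) = g(\pi_2 Z) \overset{d}{=} g(Z).
\]
At the level of events: for any measurable $A\subseteq(\mathcal{Z}')^{n+1}$,
\[
\mathbb{P}(\pi_1 g(Z)\in A)=\mathbb{P}(\pi_2 Z\in g^{-1}(A))=\mathbb{P}(Z\in g^{-1}(A))=\mathbb{P}(g(Z)\in A).
\]
Since $\pi_1$ was arbitrary, $Z'$ is exchangeable.

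The only delicate point is that $\pi_2$ must be chosen uniformly in $z$. If it were allowed to depend on $z$, the pushforward step would fail, because $Z\mapsto g(\pi_2(Z)Z)$ would no longer be a fixed measurable map applied to an exchangeable vector. The hypothesis as stated ("for all $z$" placed after the choice of $\pi_2$) excludes this, so there is no real obstacle beyond noting measurability.

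For the intended application, take $g$ to act coordinatewise, $g(z)_i=h(z_i)$ with $h(x,y)=(r(f(x)\mid x),\,\xi(x,y,f(x)))$. Then $\pi_2=\pi_1$ works, which yields Lemma~\ref{lem: uq_exchangeable}.
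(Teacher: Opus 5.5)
Your proof is correct: the chain $\pi_1 g(Z)=g(\pi_2 Z)\stackrel{d}{=}g(Z)$ is the standard pushforward argument behind the Dean--Verducci result. The paper does not prove this result itself but only cites it, and your caution that $\pi_2$ must be chosen independently of $z$ is well placed, since with a $z$-dependent $\pi_2$ the conclusion can genuinely fail. Your closing check that a coordinatewise $g$ satisfies the hypothesis with $\pi_2=\pi_1$ is exactly how the paper uses the theorem in its proof of Lemma~\ref{lem: uq_exchangeable}.
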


\begin{lemma}[Exchangeability of Uncertainty and Correctness Pairs]
\label{lem: uq_exchangeable}
Let $(X_1,Y_1),\dots,(X_{n+1},Y_{n+1})$ be exchangeable random variables representing questions and ground-truth answers. Define
\[
U_i := r(f(X_i)|X_i), \quad J_i := \xi(X_i,Y_i,f(X_i)), \quad\text{for each}\quad i=1,\dots,n+1,
\]
where $r(\cdot)$ is an uncertainty scoring function,  $\xi(\cdot)$ is a correctness evaluation function, and $f(\cdot)$ is a deterministic language model. Then $
(U_1,J_1),\dots,(U_{n+1},J_{n+1})
$ are exchangeable.
\end{lemma}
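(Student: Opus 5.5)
The plan is to apply Theorem~\ref{thm:exchangeability preservation} with $Z_i := (X_i,Y_i)$ and $\mathcal{Z}' := \mathbb{R}\times\{0,1\}$. We take $g$ to be the coordinatewise map
\[
g(z_1,\dots,z_{n+1}) := \bigl(h(z_1),\dots,h(z_{n+1})\bigr), \qquad h(x,y) := \bigl(r(f(x)\,|\,x),\ \xi(x,y,f(x))\bigr).
\]
Because $f$ is a fixed deterministic model and $r$ and $\xi$ are fixed deterministic functions, $h$ is a single well-defined (measurable) map $\mathcal{Z}\to\mathcal{Z}'$. It does not depend on the index $i$ or on any other sample. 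By construction $g(Z_1,\dots,Z_{n+1}) = ((U_1,J_1),\dots,(U_{n+1},J_{n+1}))$.

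Next we verify the hypothesis of Theorem~\ref{thm:exchangeability preservation}. Given any permutation $\pi_1$ of $\{1,\dots,n+1\}$, we choose $\pi_2 := \pi_1$. Applying $h$ coordinatewise commutes with reordering coordinates, so
\[
\pi_1 g(z_1,\dots,z_{n+1}) = \bigl(h(z_{\pi_1(1)}),\dots,h(z_{\pi_1(n+1)})\bigr) = g\bigl(\pi_1(z_1,\dots,z_{n+1})\bigr)
\]
for every $(z_1,\dots,z_{n+1})\in\mathcal{Z}^{n+1}$. The theorem then yields exchangeability of the pairs $(U_i,J_i)$ immediately.

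The only genuine obstacle is the determinism and measurability of $h$. If $f$ decodes stochastically (e.g.\ by sampling), then $h$ also depends on auxiliary randomness. In that case I would augment $Z_i$ to $(X_i,Y_i,\varepsilon_i)$ with i.i.d.\ seeds $\varepsilon_i$ independent of the data, so the augmented vector is still exchangeable, and then rerun the same argument. Under the lemma's stated assumption of a deterministic $f$, this is unnecessary. We also need $r$ and $\xi$ not to use calibration-set information that singles out particular indices. They are fixed in advance, so this holds.

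For completeness, the argument can also be made directly without citing the theorem. For any permutation $\pi$ and measurable set $A$,
\[
\mathbb{P}\bigl(g(Z)_\pi \in A\bigr) = \mathbb{P}\bigl(g(Z_\pi)\in A\bigr) = \mathbb{P}\bigl(Z_\pi \in g^{-1}(A)\bigr) = \mathbb{P}\bigl(Z\in g^{-1}(A)\bigr) = \mathbb{P}\bigl(g(Z)\in A\bigr),
\]
where the third equality uses exchangeability of $Z$.
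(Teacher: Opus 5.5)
Your proposal is correct and follows essentially the same route as the paper: both treat $(X_i,Y_i)\mapsto(U_i,J_i)$ as one fixed deterministic map applied coordinatewise, and both invoke Theorem~\ref{thm:exchangeability preservation} with $\pi_2=\pi_1$. Your direct pushforward argument, and your remarks on measurability, stochastic decoding, and $r,\xi$ being fixed independently of the calibration data, are sound refinements that the paper leaves implicit.
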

Lemma~\ref{lem: uq_exchangeable} (proved in Appendix~\ref{appendix: proof of uq_exchangeable}) allows reasoning about the subset of uncertainty scores corresponding to correctly predicted samples. Specifically, conditional on the event that the test prediction is correct ($J_{n+1}=1$), the test uncertainty score $U_{n+1}$ is exchangeable with the set of calibration uncertainty scores for correctly predicted samples
$
\mathcal{U} \coloneqq \{\, U_i : J_i = 1,\ i = 1,\dots,n \,\}$.

Let $c$ denote the cardinality of $\mathcal{U}$, i.e., $c := |\mathcal{U}|$. The maximal coverage level associated with $\mathcal{U}$ that are no greater than the threshold $\tau$ is given by
\[
1-\beta
\;\coloneqq\;
\sup\{
q \in (0,1) : 
\operatorname{Quantile}(\lceil q(c+1)\rceil/{c}, \mathcal{U}) \le \tau
\}.
\]
Let $\sigma := \operatorname{Quantile}\bigl(\lceil(1-\beta)(c+1)\rceil / c, \mathcal{U}\bigr)$ be the quantile value of $\mathcal{U}$ corresponding to the maximal coverage level $1-\beta$ below $\tau$. By construction, we have $\sigma \le \tau$, and hence
\begin{equation}\label{eq: correct sample retain ratio}
            \mathbb{P}(X_{n+1}\in\mathcal{K}\mid J_{n+1}=1)=\mathbb{P}(U_{n+1}\le \tau \mid J_{n+1}=1)\ge \mathbb{P}(U_{n+1}\le \sigma \mid J_{n+1}=1) \ge 1-\beta,
\end{equation}
providing a lower bound on the probability that a correct prediction is retained (i.e., not abstained).

By exchangeability, the test prediction is correct with probability $\mathbb{P}(J_{n+1}=1) = c/n$.
Finally, combining the lower bound in Eq.~(\ref{eq: correct sample retain ratio}) with the upper bound in Eq.~(\ref{eq: participation guarantee}) via Bayes' rule, the \emph{conditional correctness guarantee} follows
\begin{equation}\label{eq: conditional correctness guarantee}
    \mathbb{P}(J_{n+1}\shorteq{0.1em}1\mid X_{n+1}\in\mathcal{K})\shorteq{0.1em} \frac{\mathbb{P}(X_{n+1}\in\mathcal{K}\mid J_{n+1}\shorteq{0.1em}1)\mathbb{P}(J_{n+1}\shorteq{0.1em}1)}{\mathbb{P}(X_{n+1}\in\mathcal{K})} \ge \frac{1-\beta}{1-\alpha+{1}/(n+1)}\cdot\frac{c}{n},
\end{equation}
which bounds on the probability that a prediction is correct, conditioned on model participation.

\begin{wrapfigure}[10]{r}{8cm}
\centering
\captionsetup{singlelinecheck = false, skip=5pt, justification=justified}
\vspace{-10pt}
  \includegraphics[scale=0.35]{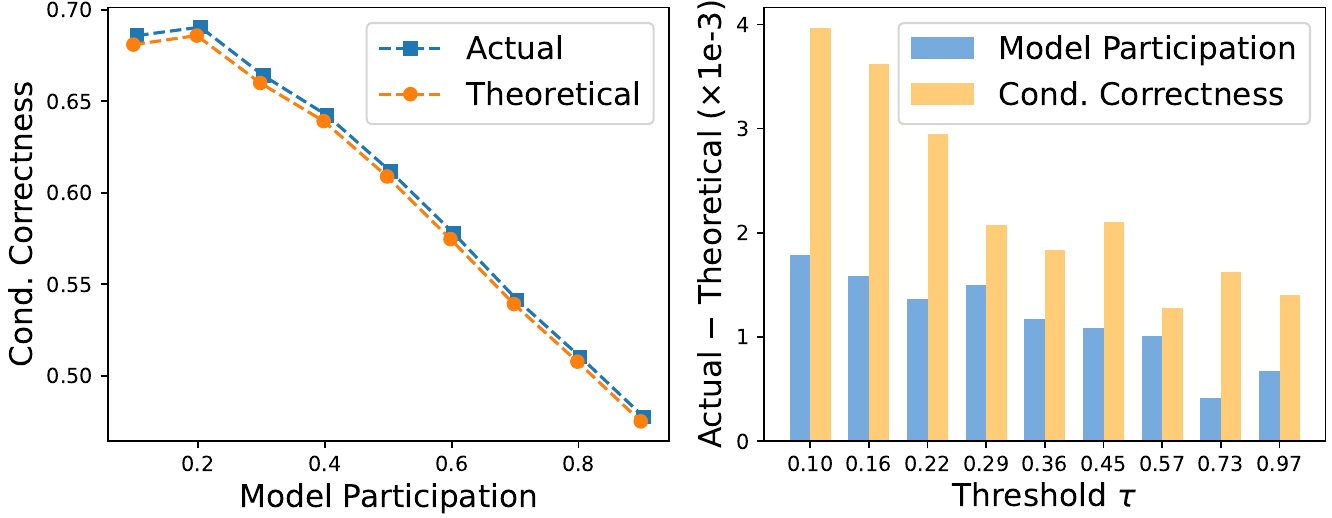}
  \caption{Participation and conditional correctness are ensured.}
  \label{fig: val} 
\end{wrapfigure}
We validate model participation and conditional correctness on Simple-Questions-Wikidata~\cite{wikidata-benchmark} using Gemma-3-4B-Instruct~\cite{gemmateam2025gemma3technicalreport}, with perplexity (Eq.~\ref{eq: perplexity}) as the uncertainty score. As shown in Figure~\ref{fig: val}, both participation rates and conditional correctness consistently exceed their theoretical guarantees across a wide range of thresholds $\tau$, yielding positive gaps.

To achieve high conditional correctness, it is crucial to increase \(1 - \beta\) in Eq.~(\ref{eq: conditional correctness guarantee}), since \(1 - \alpha\) is user-specified and \(c/n\) is determined by the model’s inherent accuracy. As shown in Eq.~(\ref{eq: correct sample retain ratio}), \(1 - \beta\) lower bounds the retention rate of correct predictions. This quantity increases when correct predictions are assigned lower uncertainty scores. Equivalently, improving conditional correctness requires assigning higher confidence to correct predictions, thereby increasing their probability of being retained.
\section{Calibration via Representation Geometry}
We posit that a model’s prediction is more likely to be correct when its internal knowledge actively shapes the representations from which the prediction tokens are decoded. 
Since MLPs serve as the primary locus of memorized knowledge~\cite{dong2025attention}, we analyze whether their updates induce substantive geometric transformations of the representations across layers, as opposed to simple passive propagation. 
These geometry-based signals are subsequently used to calibrate prediction confidence.
\subsection{Preliminary}
Consider a language model composed of $L$ stacked transformer layers. Let a query $x=(x[1],\dots,x[T])$ of $T$ tokens have hidden states $e^l=(e^l[1],\dots,e^l[T])\in\mathbb{R}^{d\times T}$ produced by layer $l$, with $e^0$ as the input embeddings. Each layer updates the representations as 
\begin{equation}
        \tilde{e}^{l}=e^{l-1}+\mathrm{Attn}^l(e^{l-1}),\quad e^l=\tilde{e}^{l}+\mathrm{MLP}^l(\tilde{e}^{l}),
\end{equation}
where $\mathrm{Attn}^l(\cdot)$ and $\mathrm{MLP}^l(\cdot)$ denote the self-attention and MLP transformations at layer $l$, respectively, and $\tilde{e}^l$ is the intermediate representation after self-attention.
However, the changes in representations do not disentangle whether they result from contextual understanding of the query or from knowledge injection, necessitating an explicit assessment of their origin.

\subsection{Knowledge Contribution}
Layer-wise knowledge contribution to representation changes is derived by aggregating information flow~\cite{ferrando2022measuring} between self-attention and MLP blocks.

\paragraph{Information flow within language models}
Assume that a language model uses multi-head self-attention with $H$ heads, each of dimension $D_H=D/H$. For head $h\in\{1,...,H\}$ at layer $l$, the self-attention block produces an attention matrix $A^{lh}\in\mathbb{R}^{T\times T}$, and the corresponding value matrix is denoted $W_V^{lh}\in\mathbb{R}^{D_H\times D}$. Each layer has a projection matrix $W_O^l\in\mathbb{R}^{D\times D}$, which can be partitioned into $H$ submatrices $W_O^{lh}\in\mathbb{R}^{D\times D_H}$. With these notations, we can quantify the component from the source token embedding $e^{l-1}[s]$ to the self-attention output of the target token $\tilde{e}^l[t]$ by
\begin{equation*}
    a(t,s):=\mathbf{1}_{\{s=t\}}e^{l-1}[s]+\sum\nolimits_{h=1}^H {W}_O^{lh}{A}^{lh}[t,s]W_V^{lh}e^{l-1}[s],
\end{equation*}
where $\mathbf{1}_{\{s=t\}}$ is an indicator function that equals 1 if $s=t$ and 0 otherwise. Thereby, the target token embedding after self-attention transformation can be written as $\tilde{e}^{l}[t]=\sum\nolimits_{s=1}^Ta(t,s)$.
\begin{definition}[Normalized Proximity]~\cite{ferrando2022measuring}\label{def: proximity}
Let $z\in\mathbb{R}^D$ admit a decomposition $z=\sum_{i=1}^n z_i$. The normalized proximity of component $z_i$ and $z$ follows
\begin{equation}\label{eq: proximity}
\mathrm{prox}\left(z,z_i\right):=\frac{\max(0,||z||_1-||z-z_i||_1)}{\sum_{j=1}^n\max(0,||z||_1-||z-z_j||_1)}.
\end{equation}
\end{definition}
High proximity indicates a strong contribution of $z_i$ to $z$.
Based on Definition~\ref{def: proximity}, we denote the attention contribution at layer $l$ by $C^l_\mathrm{Attn}$, with elements 
\begin{equation}
    C^l_\mathrm{Attn}[t,s] := \mathrm{prox}(\tilde{e}^l[t], a(t,s)).
\end{equation}
Due to autoregressive attention, each token can only depend on itself and its
predecessors. Hence, attention matrix $A^{lh}$ satisfies $A^{lh}[t,s]=0$ if $s>t$. As a result, $C^l_\mathrm{Attn}$ is lower-triangular with entries
above the diagonal equal to zero.

Since MLP blocks act independently on each token, the hidden state update by MLP can be written as
$
e^l[t]=\tilde{e}^l[t]+m^l[t]$, where $ m^l[t]:=\mathrm{MLP}^l(\tilde{e}^l[t])$.
Using Definition~\ref{def: proximity}, we define the MLP contribution matrix at layer $l$ as $C_{\mathrm{MLP}}^l$. Its diagonal entries are given by
\begin{equation}
    C_{\mathrm{MLP}}^l[t,t] := \mathrm{prox}(e^l[t], m^l[t]),
\end{equation}
while all off-diagonal entries are zero: $    C_{\mathrm{MLP}}^l[t,s] := 0, \quad \forall s \neq t$,
reflecting the absence of cross-token interactions on MLP operations. The information flow and contribution matrices for layer $l$ are visualized in Figure~\ref{fig: matrices and knowledge}(a).
\begin{figure*}[t]
\centering
\captionsetup{singlelinecheck = false, skip=5pt, justification=justified}
  \includegraphics[scale=0.38]{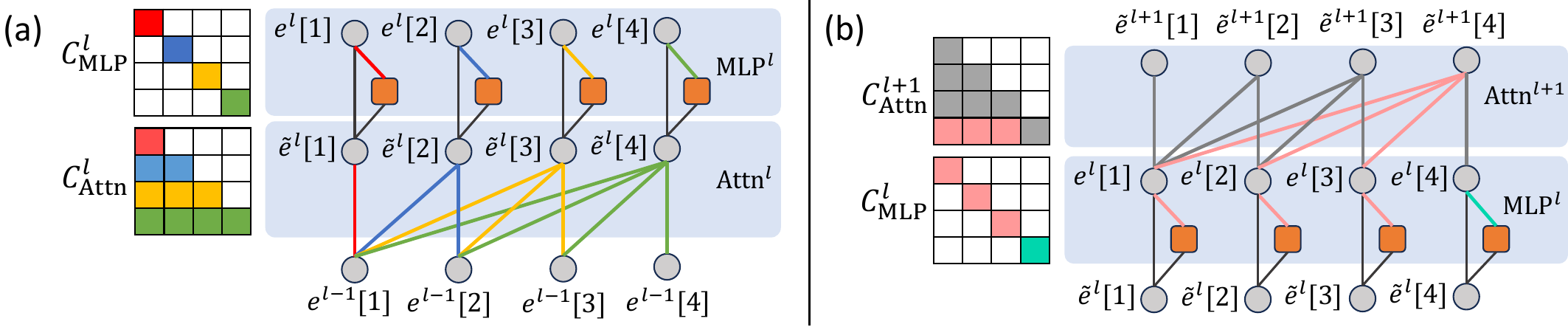}
  \vspace{-1pt}
  \caption{(a)
  Information flow and contribution matrices of layer $l$. Edge colors in the flow graph correspond to the different values in the contribution matrices (no edges means zero contribution). (b) Knowledge contribution from the MLP block at layer $l$. The \textbf{\textcolor{cyan}{cyan edge and entry}} indicate direct contribution to the last input token, corresponding to Eq.~(\ref{eq: direct omega}), while the \textbf{\textcolor{pink}{pink edges and entries}} denote propagated contributions through subsequent self-attention, as described in Eq.~(\ref{eq: prop omega}).}
  \label{fig: matrices and knowledge} 
  \vspace{-10pt}
\end{figure*}
\paragraph{Knowledge Contribution}
We use the matrices $C^l_\mathrm{Attn}$ and $C^l_\mathrm{MLP}$ for $l=1,..., L$ to measure how the knowledge stored in MLP blocks shapes the prediction probability of the generated token at position $T+1$ throughout layers. Figure~\ref{fig: matrices and knowledge} (b) shows that the gap between $\tilde{e}^l[T]$ and $e^l[T]$ is induced by the MLP update to the last query token, $m^l[T]$, so we define the \textcolor{cyan}{direct knowledge contribution} as
\begin{equation}\label{eq: direct omega}
    {\omega^l_\mathrm{dir}}:=C^l_\mathrm{MLP}[T,T].
\end{equation}
The effect of layer $l$'s MLP on earlier tokens is not reflected in $e^l[T]$ but propagates via self-attention to $\tilde{e}^{l+1}[T]$ in the next layer. Thus, the \textcolor{pink}{propagated knowledge contribution} is
\begin{equation}\label{eq: prop omega}
{\omega^l_\mathrm{prop}}:=
\operatorname{diag}\!\big(C^l_{\mathrm{MLP}}\big)_{1:T-1}^{\top}
\, C^{\,l+1}_{\mathrm{Attn}}[T,1:T-1], 
\end{equation}
where $\operatorname{diag}(\cdot)_{1:T-1}$ denotes the vector formed by the first $T\!-\!1$ diagonal entries of its matrix argument. 

Together, Eq.~(\ref{eq: direct omega}) and Eq.~(\ref{eq: prop omega}) capture how MLP-stored knowledge actively shapes the representation of the last query token. We denote the full contribution trajectory by 
\begin{equation}
    \Omega:=(\omega_\mathrm{dir}^1, \omega_\mathrm{prop}^1,...,\omega_\mathrm{dir}^{L-1},\omega_\mathrm{prop}^{L-1},\omega_\mathrm{dir}^L)\in\mathbb{R}^{2L-1}.
\end{equation}
In the final layer, MLP updates to earlier tokens do not influence the last query token through self-attention, so the propagated contribution vanishes, and $\omega_\mathrm{prop}^L$ is not defined.
\subsection{Embedding Rotation}
A high knowledge contribution does not always change the model’s decision, which occurs when the token with the highest predicted probability at one layer is replaced by another token in the next layer. Therefore, we aim to link representation geometry to these decision dynamics.
\begin{lemma}[Scale Invariance of Token Ranking]\label{lem: rescale}
    Let $\varepsilon\in\mathbb{R}^D$ be a token embedding, and let $o:=W_E\cdot\varepsilon\in\mathbb{R}^V$ be the corresponding logits, where $W_E$ is the unembedding matrix. Let $\pi$ denote the ranking such that $o[\pi(1)]\geq o[\pi(2)]\geq\cdots\geq o[\pi(V)]$. For any $\lambda>0$, define $\varepsilon'=\lambda\varepsilon$ and $o'= W_E\cdot\varepsilon'$. Then the ranking is invariant under positive scaling:
    \begin{equation}
        o'[\pi(1)]\geq o'[\pi(2)]\geq\cdots\geq o'[\pi(V)]
    \end{equation}
\end{lemma}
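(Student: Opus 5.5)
The plan is to prove the lemma directly from the linearity of the unembedding map together with the fact that multiplying by a positive scalar preserves order in $\mathbb{R}$. First, linearity gives, for every coordinate $k\in\{1,\dots,V\}$,
\[
o'[k] = \bigl(W_E\cdot(\lambda\varepsilon)\bigr)[k] = \lambda\,\bigl(W_E\cdot\varepsilon\bigr)[k] = \lambda\, o[k].
\]
So the rescaled logit vector is just $o' = \lambda o$, and no structure of $W_E$ beyond its linearity is used.

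Second, fix any consecutive pair in the ranking $\pi$, for which $o[\pi(j)]\ge o[\pi(j+1)]$ holds by definition. Multiplying both sides by $\lambda>0$ preserves the inequality, which yields $o'[\pi(j)]\ge o'[\pi(j+1)]$. Chaining these inequalities over $j=1,\dots,V-1$ gives the claimed ordering of $o'$ under the same permutation $\pi$.

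There is no genuine obstacle here; the argument is purely elementary. The one point worth remarking on is ties. If $o[\pi(j)]=o[\pi(j+1)]$, then the rescaled values are also equal, so any tie-breaking convention used for $\pi$ on $o$ remains valid for $o'$. In particular, the set of permutations that rank $o$ coincides with the set that ranks $o'$, and so the argmax token is unchanged.

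I would close by noting the consequence used in the sequel. Because only the direction of $\varepsilon$ determines the ranking (and hence the decoded top token), changes in decision across layers should be measured through angular quantities such as the rotation $\theta$ between successive representations, rather than through their norms.
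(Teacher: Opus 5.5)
Your proof is correct and follows the paper's own argument: linearity gives $o'=\lambda o$, and multiplying by $\lambda>0$ preserves each inequality $o[\pi(j)]\ge o[\pi(j+1)]$. Your remark on ties adds a small point the paper leaves implicit: because $\lambda>0$ makes the implication an equivalence, the set of valid rankings, and hence the argmax set, is exactly unchanged.
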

\begin{lemma}[Rotation Sensitivity of Token Ranking]\label{lem: rotation}
Let $\varepsilon,\varepsilon'\in\mathbb{R}^D$ be two unit token embeddings with logits  
$o=W_E\cdot\varepsilon,\; o'=W_E\cdot\varepsilon' \in \mathbb{R}^V$ under the unembedding matrix $W_E$.  
Denote by $v_*=\arg\max_v o[v]$
the predicted token under $o$, and let $\theta$ be the angle between $\varepsilon$ and $\varepsilon'$.  If
\begin{equation}\label{eq: rotation}
    2\|W_E\|_2\,\sin(\theta/2)
\;\le\;
\min_{v\neq v_*}(o[v_*]-o[v]),
\end{equation}
the top token is preserved, i.e., $\arg\max_{v} o'[v] \;=\; v_*$.
\end{lemma}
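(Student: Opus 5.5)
The plan is a perturbation argument. Write the rotated embedding as a bounded perturbation of the original, $\varepsilon'=\varepsilon+\delta$. Then show that no logit \emph{gap} can shrink by more than the margin $\min_{v\neq v_*}(o[v_*]-o[v])$. The geometric input is that for unit vectors at angle $\theta$,
\[
\|\delta\|_2^2=\|\varepsilon'-\varepsilon\|_2^2=2-2\cos\theta=4\sin^2(\theta/2),
\]
so $\|\delta\|_2=2\sin(\theta/2)$. This is where the $\sin(\theta/2)$ in Eq.~(\ref{eq: rotation}) comes from. Lemma~\ref{lem: rescale} justifies restricting to unit embeddings, since rescaling does not change the ranking.

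Next, fix any $v\neq v_*$ and let $w_u^\top$ denote the $u$-th row of $W_E$. Linearity gives
\[
o'[v_*]-o'[v]=\bigl(o[v_*]-o[v]\bigr)+(w_{v_*}-w_v)^\top\delta .
\]
It therefore suffices to show $|(w_{v_*}-w_v)^\top\delta|\le o[v_*]-o[v]$, because then $o'[v_*]\ge o'[v]$ for every $v$ and $v_*$ remains a maximizer. The perturbation term can be bounded through the logit change $\Delta:=o'-o=W_E\delta$, which satisfies $\|\Delta\|_2\le\|W_E\|_2\|\delta\|_2=2\|W_E\|_2\sin(\theta/2)$. The perturbation term is $\Delta[v_*]-\Delta[v]$.

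The main obstacle is the constant. Passing from $\|\Delta\|_2$ to a bound on the single difference $\Delta[v_*]-\Delta[v]$ naively costs a factor: $|\Delta[v_*]-\Delta[v]|\le\sqrt2\,\|\Delta\|_2$ via the two-coordinate restriction, or a factor $2$ via $\|\Delta\|_\infty$. Neither is absorbed by the hypothesis as literally stated. My first attempt is to bound the term directly as $\|w_{v_*}-w_v\|_2\,\|\delta\|_2$. This closes the argument exactly when $\|w_{v_*}-w_v\|_2\le\|W_E\|_2$. That holds if $\|W_E\|_2$ is read as the largest row-difference norm, i.e.\ the operator norm of $W_E$ on the pairwise-difference directions relevant to the ranking. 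It also holds in the common regime where the unembedding rows are roughly orthogonal with comparable norms.

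If that reading is not available, I would adjust the statement minimally rather than force it. One option is to replace $2\|W_E\|_2$ by $2\sqrt2\,\|W_E\|_2$, or by $2\max_{v}\|w_{v_*}-w_v\|_2$. The other is to keep the margin condition but conclude only weak preservation, $o'[v_*]\ge o'[v]$. Separately, with the non-strict inequality in Eq.~(\ref{eq: rotation}), equality could produce a tie in $o'$. Uniqueness of $\arg\max_v o'[v]=v_*$ therefore needs either strict inequality or a tie-breaking convention, and I would note this explicitly. Everything else in the proof is the two routine displays above.
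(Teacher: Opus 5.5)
You are right not to force the stated constant, and the situation is worse than you say: the statement is false as written, so no proof of it exists. Let $D=V=2$ and let $W_E$ have rows $w_1=(1,0)$ and $w_2=(-1,0)$, so $\|W_E\|_2=\sqrt{2}$. Take $\varepsilon=(1/\sqrt{2},1/\sqrt{2})$ and $\varepsilon'=(-\sin\eta,\cos\eta)$ with $0<\eta<\pi/12$. Then $\varepsilon^\top\varepsilon'=\cos(\pi/4+\eta)$, so $\theta=\pi/4+\eta<\pi/3$. Hence $2\|W_E\|_2\sin(\theta/2)<2\sqrt{2}\sin(\pi/6)=\sqrt{2}=o[1]-o[2]$, and the hypothesis holds strictly. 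Yet $o'=(-\sin\eta,\sin\eta)$, so the top token flips.

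The paper's proof reaches the stated constant only by dropping a factor, at exactly the step you flagged. It needs $2\|\gamma\|_\infty\le\min_{v\neq v_*}(o[v_*]-o[v])$, but it only has $\|\gamma\|_\infty\le\|\gamma\|_2\le 2\|W_E\|_2\sin(\theta/2)$. Combined honestly, that requires $4\|W_E\|_2\sin(\theta/2)$ on the left. Your repair with $2\sqrt{2}\,\|W_E\|_2$ is correct: it uses Cauchy--Schwarz and $\|w_{v_*}-w_v\|_2\le\sqrt{2}\,\|W_E\|_2$, since the row difference is $W_E^\top$ applied to a vector of norm $\sqrt{2}$. It also beats the paper's $\ell_\infty$ route. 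Taking $\varepsilon=(\sin\epsilon,\cos\epsilon)$ with $\epsilon\to 0$ in the same construction shows $2\sqrt{2}$ cannot be lowered. Your row-difference version is also correct.

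Several of your fallback remarks should be dropped, though.
\begin{itemize}
\item Reading $\|W_E\|_2$ as the largest row-difference norm is not available, since the paper's proof explicitly uses the spectral norm.
\item Keeping the original margin condition and concluding only $o'[v_*]\ge o'[v]$ also fails. The example above gives a strict reversal, so the defect is the constant, not strict versus weak inequality.
\item Your heuristic that $\|w_{v_*}-w_v\|_2\le\|W_E\|_2$ holds for roughly orthogonal rows of comparable norm is backwards. For $V\le D$, orthogonal rows of common norm $\rho$ give $\|W_E\|_2=\rho$ and $\|w_{v_*}-w_v\|_2=\sqrt{2}\,\rho$, which is the extremal case. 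What actually rescues the stated constant is a large vocabulary. Since $\|W_E\|_2^2\ge\|W_E\|_F^2/D$, your first attempt closes whenever $\|W_E\|_F^2\ge 4D\max_v\|w_v\|_2^2$, e.g.\ when $V\gg D$ and the row norms are comparable.
\item The tie caveat is unnecessary for the corrected lemma. Set $u=w_{v_*}-w_v$ and $\delta=\varepsilon'-\varepsilon$. Equality throughout $u^\top\varepsilon'\ge u^\top\varepsilon-\|u\|_2\|\delta\|_2\ge 0$ would force $\delta=-(\|\delta\|_2/\|u\|_2)\,u$ and $u^\top\varepsilon=\|u\|_2\|\delta\|_2$. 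That gives $\|\varepsilon'\|_2^2=1-\|\delta\|_2^2$, which contradicts $\|\varepsilon'\|_2=1$ unless $\delta=0$, and $\delta=0$ would make the margin zero. So preservation is strict whenever the margin is positive.
\end{itemize}
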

By Lemma~\ref{lem: rescale} (proved in Appendix~\ref{appendix: proof of rescale}), rescaling an embedding does not affect the model's prediction. In contrast, Lemma~\ref{lem: rotation} (proved in Appendix~\ref{appendix: proof of rotation}) shows that large rotations can violate Eq.~(\ref{eq: rotation}) and change the model’s decision.
Motivated by these results, we track how the embedding of the last query token rotates across layers. Specifically, for layer $l$, let $\theta_\mathrm{dir}^l$ be the angle between $\tilde{e}^l[T]$ and $e^l[T]$, and let $\theta_\mathrm{prop}^l$ be the angle between $e^l[T]$ and $\tilde{e}^{\,l+1}[T]$. We define the rotation trajectory by
\begin{equation}\label{eq: Theta}
    \Theta:=(\theta_\mathrm{dir}^1, \theta_\mathrm{prop}^1,...,\theta_\mathrm{dir}^{L-1},\theta_\mathrm{prop}^{L-1},\theta_\mathrm{dir}^L)\in\mathbb{R}^{2L-1}.
\end{equation}
\begin{wrapfigure}[12]{r}{5.5cm}
\centering
\captionsetup{singlelinecheck = false, skip=5pt, justification=justified}
  \includegraphics[scale=0.28]{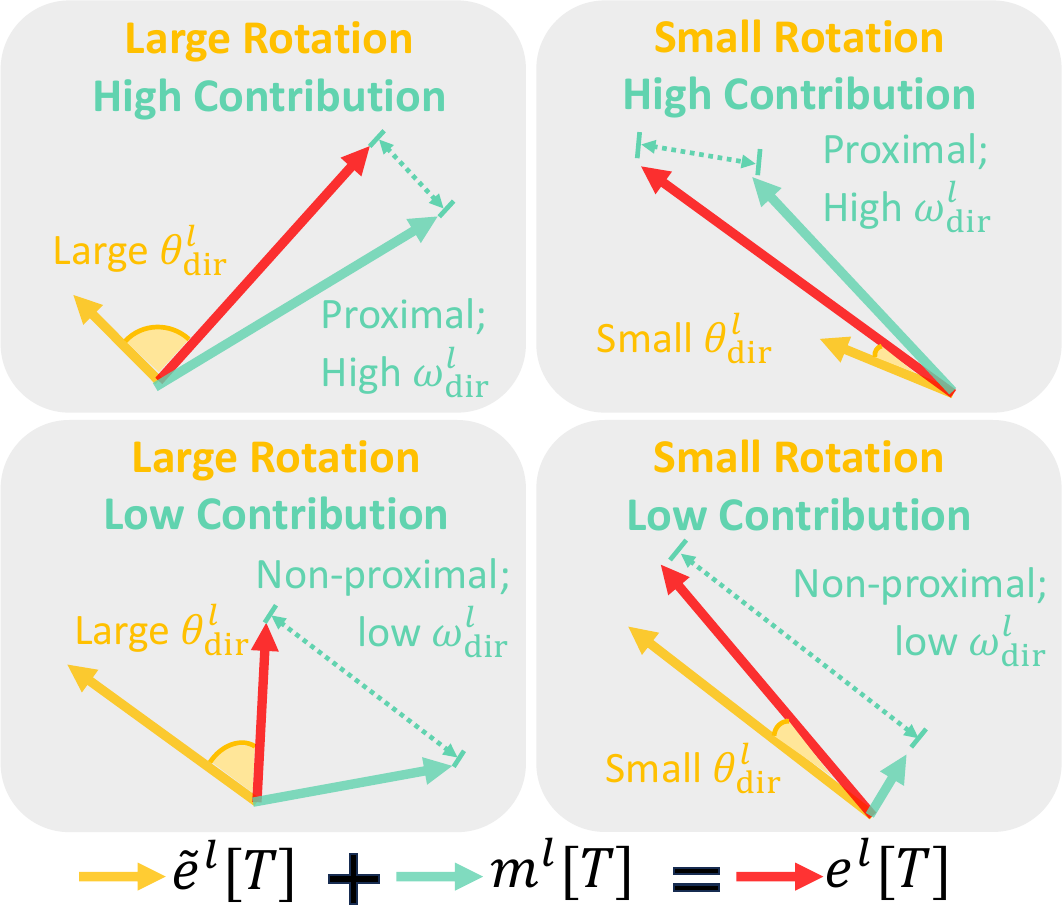}
  \caption{The relation between direct knowledge contribution and embedding rotation.}
  \label{fig: rotation} 
\end{wrapfigure}
The relation between \textcolor{cyan}{direct knowledge contribution $\omega_\mathrm{dir}^l$} and its corresponding \textcolor{gold}{embedding rotation $\theta^l_\mathrm{dir}$} is shown in Figure~\ref{fig: rotation}, which can be similarly extended to propagated contributions. 
Overall, knowledge contribution and embedding rotation are not necessarily tightly coupled, as similar levels of contribution can lead to different degrees of rotational change depending on the local geometry.

\textbf{Hypothesis 1.}
Correctness is associated with cases where strong knowledge contribution induces significant embedding rotation, reflecting an effective redirection of the representation with increased prediction confidence.

\subsection{Anisotropy}
Even with high knowledge contribution and large embedding rotation, the decision can still be incorrect due to misleading knowledge. Therefore, it is crucial to evaluate the likelihood that the resulting prediction is truly correct.

Transformer-based language models exhibit embeddings of training or in-distribution samples that concentrate within a narrow cone in the hidden space $\mathbb{R}^D$, inducing strong anisotropy~\cite{ethayarajh2019contextual}.
This concentration yields well-defined dominant directions at each layer, which serve as canonical directions for typical in-distribution representations.

To distinguish between in-distribution and out-of-distribution behaviors, we define separate embedding distributions. Specifically, for layer $l$, let 
$\tilde{\eta}^{l,{\mathrm{in}}}$ and $\eta^{l,{\mathrm{in}}}$
denote the expectation of token embeddings produced by self-attention and MLP blocks for in-distribution inputs, and $\tilde{\eta}^{l,{\mathrm{out}}}$ and $\eta^{l,{\mathrm{out}}}$ denote their out-of-distribution counterparts. 

\textbf{Hypothesis 2.}
In-distribution queries induce hidden states that closely align with $\tilde{\eta}^{l,\mathrm{in}}$ and $\eta^{l,\mathrm{in}}$, remaining within the anisotropic cone and leading to more reliable predictions. In contrast, out-of-distribution queries deviate from this cone and shift toward $\tilde{\eta}^{l,\mathrm{out}}$ and $\eta^{l,\mathrm{out}}$. Such deviations indicate extrapolation beyond the model’s training-induced representation region, resulting in uncertainty~\cite{ren2023outofdistributiondetectionselectivegeneration}.

\begin{wrapfigure}[14]{r}{5.55cm}
\centering
\captionsetup{singlelinecheck = false, skip=5pt, justification=justified}
\vspace{-8pt}
  \includegraphics[scale=0.28]{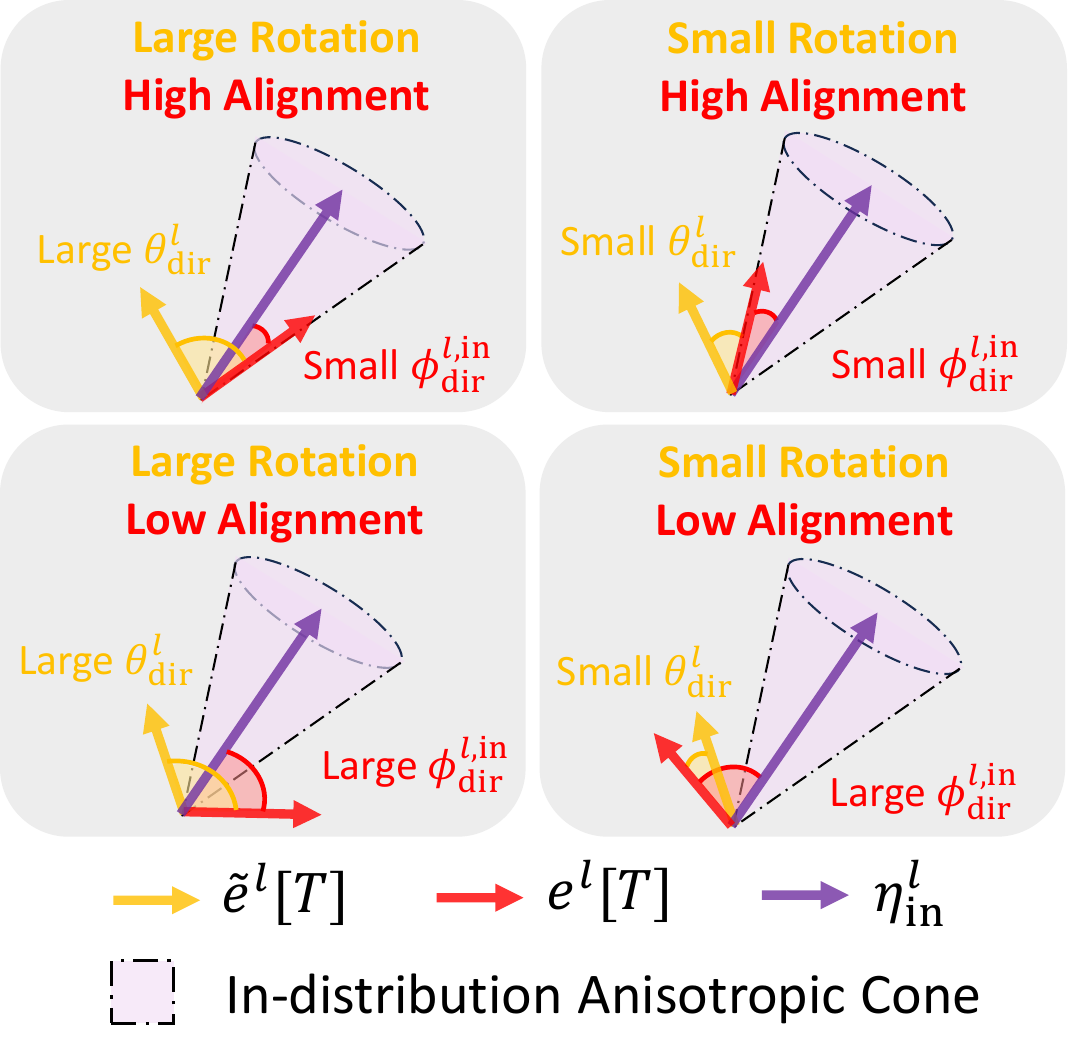}
  \vspace{-7pt}
  \caption{Dynamic rotation and static alignment are complementary but independent characterizations of prediction reliability.}
  \label{fig: anisotropy} 
\end{wrapfigure}
Analogous to Eq.~(\ref{eq: Theta}), for alignment to dominant in-distribution directions, we define $\phi^{l,\mathrm{in}}_{\mathrm{dir}}$ as the angle between $\eta^l{\mathrm{in}}$ and $e^l[T]$, and $\phi^{l,\mathrm{in}}_{\mathrm{prop}}$ as the angle between $\tilde{\eta}^{l+1}_{\mathrm{in}}$ and $\tilde{e}^{l+1}[T]$. Similarly, we define $\phi^{l,\mathrm{out}}_{\mathrm{dir}}$ and $\phi^{l,\mathrm{out}}_{\mathrm{prop}}$ with respect to $\eta^{l,{\mathrm{out}}}$ and $\tilde{\eta}^{l+1}_{\mathrm{out}}$. The anisotropy trajectories are given by
\begin{equation}\label{eq: Phi}
\begin{split}
    &\Phi^{\mathrm{in}} := (\phi_{\mathrm{dir}}^{1,\mathrm{in}}, \phi_{\mathrm{prop}}^{1,\mathrm{in}}, \ldots, \phi_{\mathrm{dir}}^{L,\mathrm{in}}),
\\&\Phi^{\mathrm{out}} := (\phi_{\mathrm{dir}}^{1,\mathrm{out}}, \phi_{\mathrm{prop}}^{1,\mathrm{out}}, \ldots, \phi_{\mathrm{dir}}^{L,\mathrm{out}}).
\end{split}
\end{equation}
Crucially, this \textit{static} deviation from anisotropy in Eq.~(\ref{eq: Phi}) is complementary to the \textit{dynamic} rotation measures between layers in Eq.~(\ref{eq: Theta}). We employ \textcolor{gold}{embedding rotation $\theta^l_{\mathrm{dir}}$} and the in-distribution \textcolor{red}{alignment $\phi^{l,\mathrm{in}}_{\mathrm{dir}}$} from the direct contribution case as illustrative examples to visualize their relation in Figure~\ref{fig: anisotropy}.
\subsection{Token-level Confidence Calibration}
Previous analyses focused on the first generated token. 
We now extend this framework to all generated tokens, assigning each token a feature vector 
$(\Omega, \Theta, \Phi^\mathrm{in}, \Phi^{\mathrm{out}}) \in \mathbb{R}^{8L-4}$.

To assess whether the proposed token-level features reflect prediction correctness, we employ the Mahalanobis distance~\cite{lee2018simpleunifiedframeworkdetecting}. Specifically, we collect token-level feature vectors from correct predictions and compute their empirical mean and covariance, denoted by $\mu_{\mathrm{corr}}$ and $\Sigma_{\mathrm{corr}}$. Given a test feature vector $\nu$, its Mahalanobis distance to this distribution is
$d_\mathrm{corr} = \sqrt{ (\nu - \mu_\mathrm{corr})^\top \Sigma_\mathrm{corr}^{-1} (\nu - \mu_\mathrm{corr}) }$.
Analogously, we define $d_\mathrm{inc}$ as the distance to the feature distribution of incorrect predictions.

\begin{wrapfigure}[14]{r}{5.55cm}
\centering
\captionsetup{singlelinecheck = false, skip=5pt, justification=justified}
  \includegraphics[scale=0.35]{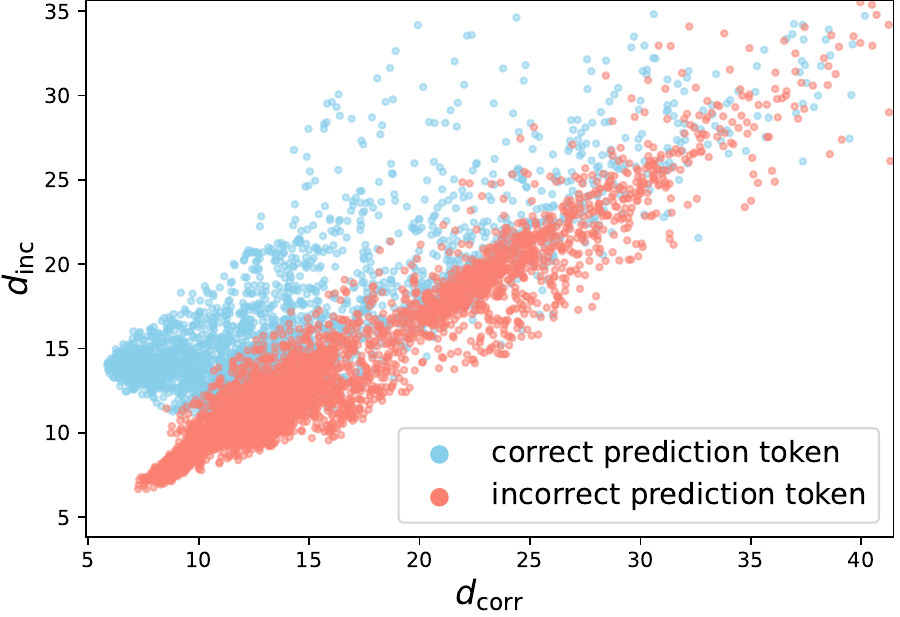}
  \caption{Separation of generated tokens from correct and incorrect predictions using $d_{\mathrm{corr}}$ and $d_{\mathrm{inc}}$.}
  \label{fig: separate} 
\end{wrapfigure}
Tokens from correct predictions should exhibit small $d_{\mathrm{corr}}$ and large $d_{\mathrm{inc}}$, while the opposite is expected for incorrect predictions. We empirically evaluate this separability using Gemma-3-4B-Instruct on Simple-Questions-Wikidata. A reference set is used to estimate $\mu_{\mathrm{corr}}, \mu_{\mathrm{inc}}$ and $\Sigma_{\mathrm{corr}}, \Sigma_{\mathrm{inc}}$.
Similarly, we treat correctly predicted samples in the reference set as in-distribution inputs to compute $\tilde{\eta}^{l,{\mathrm{in}}}$ and $\eta^{l,{\mathrm{in}}}$, while incorrectly predicted samples are treated as out-of-distribution inputs to compute $\tilde{\eta}^{l,{\mathrm{out}}}$ and $\eta^{l,{\mathrm{out}}}$. We combine the token-level features $d_\mathrm{corr}$ and $d_\mathrm{inc}$ with the uncalibrated confidence to train a calibrator, and aggregate the calibrated confidences to produce response-level uncertainty scores via perplexity in Eq.~(\ref{eq: perplexity}).

\section{Experiment}
\textbf{Dataset.} We conduct experiments on six datasets: Natural Questions (NQ)~\cite{kwiatkowski2019natural} and TruthfulQA (TQA)~\cite{lin2022truthfulqameasuringmodelsmimic} for open-ended question answering; Simple Questions Wiki (SQW)~\cite{wikidata-benchmark} and SciQ~\cite{welbl2017crowdsourcingmultiplechoicescience} for closed-form question answering; GSM8K~\cite{cobbe2021gsm8k} and CommonsenseQA (CQA)~\cite{talmor2019commonsenseqaquestionansweringchallenge} for reasoning. Each dataset is partitioned into reference, training, calibration, and test splits with a 5:3:1:1 ratio.

\textbf{Model Selection.} We use Gemma-3-4B-Instruct~\cite{gemmateam2025gemma3technicalreport}, LLaMA-3.2-3B-Instruct, and LLaMA-3-8B-Instruct~\cite{grattafiori2024llama3herdmodels} as the base inference models. HHEM-2.1-Open~\citep{hhem-2.1-open}, GPT-4o~\cite{openai2024gpt4ocard}, and GPT-5.1~\cite{openai_gpt5} are jointly used to define the evaluation function $\xi$, as detailed in Appendix~\ref{appendix: correctness eval}.
The proposed calibrator is trained using the XGBoost library~\cite{Chen_2016}.

\textbf{Baselines.} The uncertainty scoring function $r$ can be any uncertainty quantification (UQ) method that outputs a scalar reliability estimate. We compare against representative baselines from the widely used llm-polygraph benchmark~\cite{shelmanovvashurin2025}, which span several categories. Likelihood-based methods, such as naïve perplexity~\cite{fomicheva2020unsupervised} and SAR~\cite{duan2024shiftingattentionrelevancepredictive}, rely on token-level probabilities and entropy of the predictive distribution. Consistency-based approaches estimate uncertainty via response diversity, e.g., Semantic Entropy~\cite{kuhn2023semantic}. Attention-based methods exploit internal attention patterns, including Focus~\cite{zhang-etal-2023-enhancing-uncertainty} and Attention Score~\cite{chen2024insidellmsinternalstates}. Embedding-based methods use hidden representations, such as Eigen Score~\cite{sriramanan2024llmcheck} and ATRMD~\cite{vazhentsev2025tokenleveldensitybaseduncertaintyquantification}. Finally, verbalized methods are represented by P(True)~\cite{kadavath2022language}, which directly elicits self-reported confidence from the model.
The conformal abstention procedure for each method is performed over 1000 trials, with random resplitting of the calibration and test sets.
\subsection{Hypothesis Validation}
\begin{wrapfigure}[12]{r}{4.4cm}
\centering
\captionsetup{singlelinecheck = false, skip=5pt, justification=justified}
\vspace{-15pt}
  \includegraphics[scale=0.35]{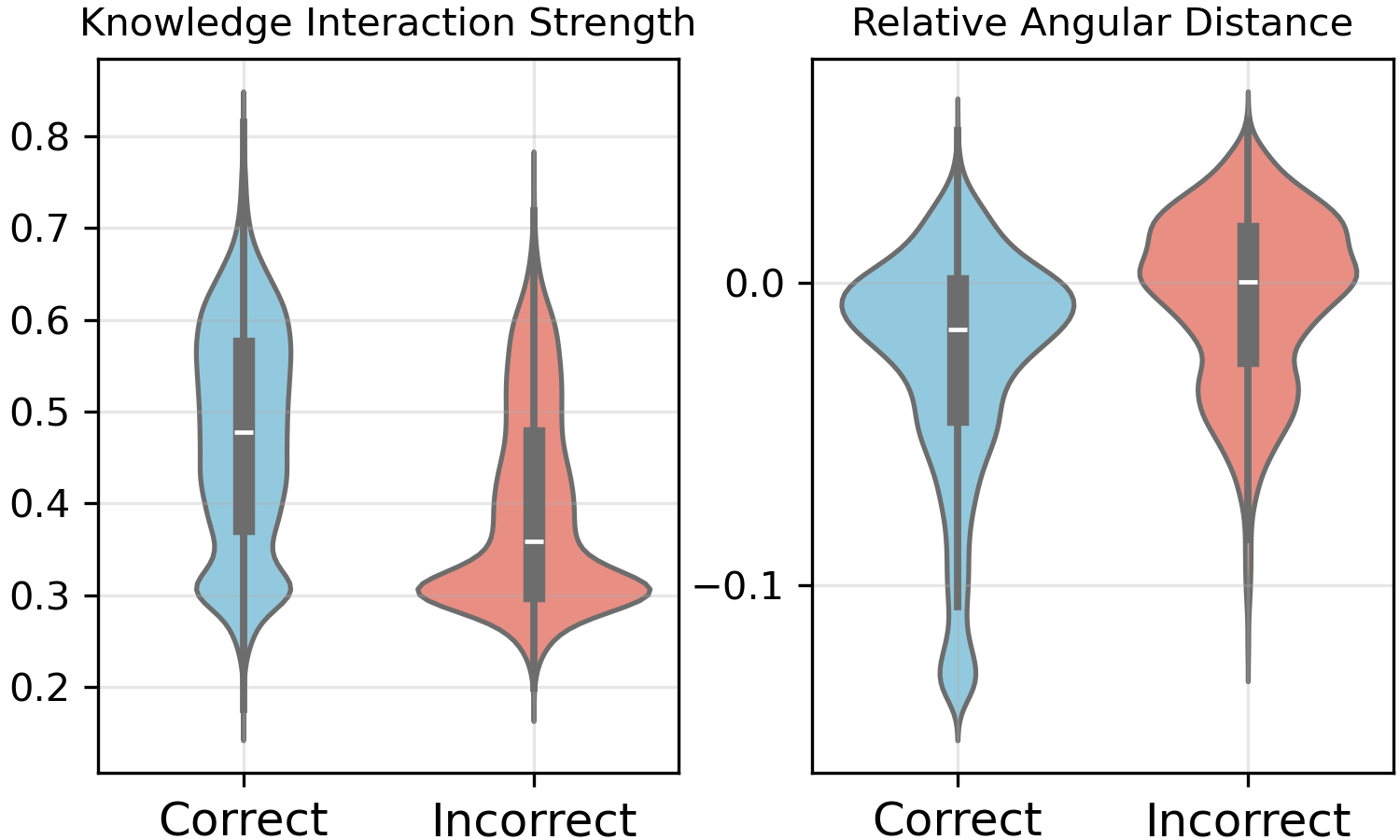}
  \caption{Knowledge interaction strength and relative angular distance between correct and incorrect predictions.}
  \label{fig: violin} 
\end{wrapfigure}
We validate Hypotheses~1 and~2 on reference sets using correct and incorrect predictions, with results averaged across datasets. For each sample, we compute the knowledge interaction strength and relative angular distance by layer-wise averaging $\Omega\odot\Theta\in\mathbb{R}^{2L-1}$ and $\Phi^\mathrm{in}-\Phi^\mathrm{out}\in\mathbb{R}^{2L-1}$, respectively, and compare the aggregated values between correct and incorrect predictions. As shown in Figure~\ref{fig: violin} (left), correct predictions exhibit stronger knowledge interaction, with higher contribution and larger rotation than incorrect ones, supporting Hypothesis~1. Figure~\ref{fig: violin} (right) shows that correct predictions are closer to in-distribution representations, reflected by smaller relative angular distance, validating Hypothesis~2.

\subsection{Main Results}
The main experimental results using Gemma-3-4B-Instruct as the inference model are shown in Figure~\ref{fig: gemma_auc}. Across various model participation rate \(1-\alpha\) ranging from 0.1 to 0.9, the proposed method achieves higher conditional correctness, which is bounded in Eq.~(\ref{eq: conditional correctness guarantee}). This indicates that incorrect predictions receive higher uncertainty scores and are more likely to be abstained from. As \(1-\alpha\) increases, the results of all methods tend to converge, which is expected since abstaining from fewer samples causes conditional correctness to approach the overall accuracy \(c/n\).
\begin{figure*}[t]
\vspace{-2pt}
\centering
\captionsetup{singlelinecheck = false, skip=5pt, justification=justified}
  \includegraphics[scale=0.5]{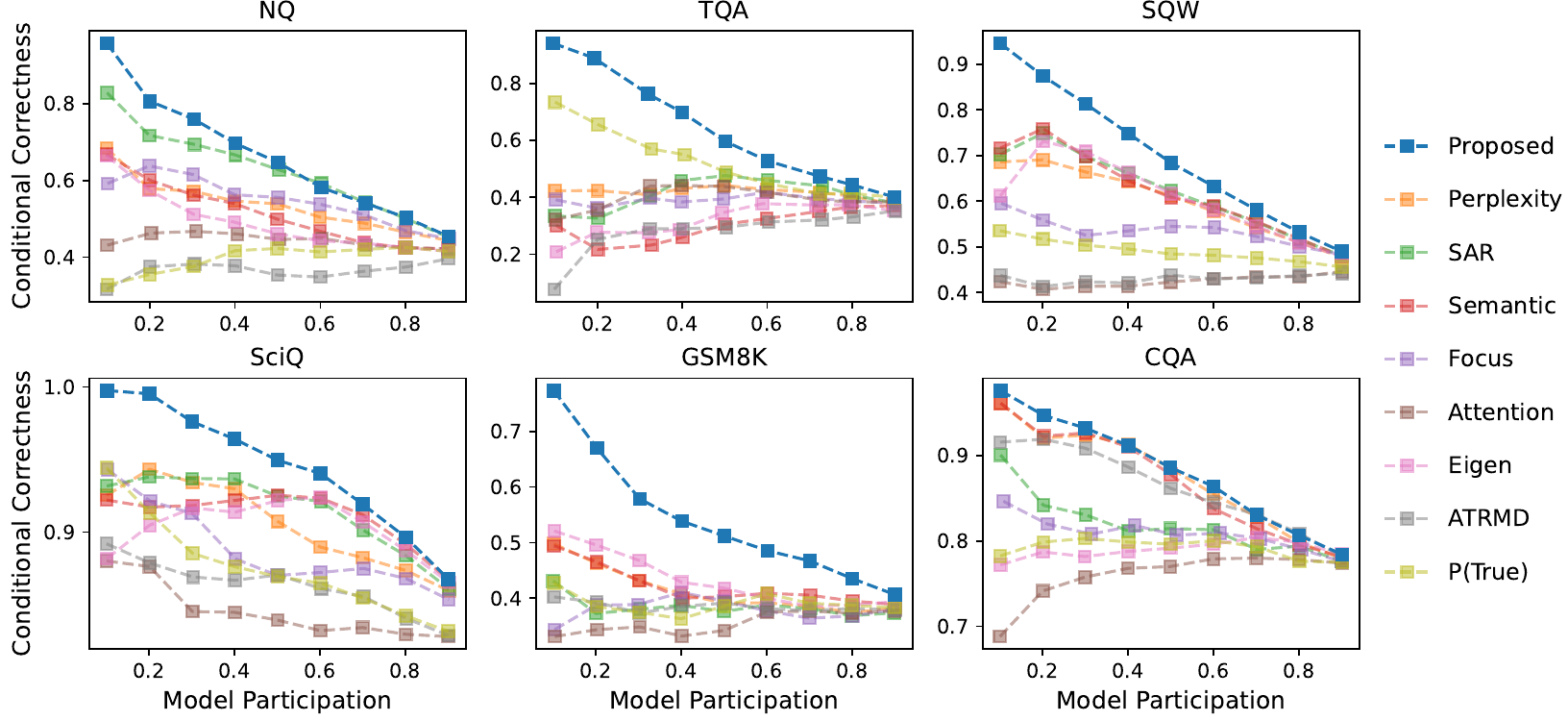}
  \caption{Conditional correctness using different UQ methods as uncertainty scoring functions, with Gemma-3-4B-Instruct as the inference model. We achieve the highest conditional correctness across participation rates.}  \label{fig: gemma_auc} 
\end{figure*}

In addition, we report comparisons outside the conformal
abstention framework using AUROC and AUPRC. Consistent with the trends observed in Figure~\ref{fig: gemma_auc}, the proposed method consistently outperforms baselines in all datasets. Experimental results using LLaMA-3.2-3B-Instruct and LLaMA-3-8B-Instruct as the inference models are presented in Appendix~\ref{appendix: additional experiments}. Furthermore, as our method involves training a calibrator, we report the performance of \emph{calibrated} baselines in the Appendix for a fair comparison.
\begin{table*}[t]
\centering
\small
\captionsetup{justification=justified}
\caption{AUROC and AUPRC of different uncertainty scores with Gemma-3-4B-Instruct. For each dataset, the highest value is shown in \textbf{bold}.}
\vspace{-5pt}
\setlength{\tabcolsep}{2pt}
\begin{tabularx}{\textwidth}{c|YYYYYY|YYYYYY} 
\specialrule{0.4pt}{0pt}{2pt}
\specialrule{0.4pt}{0pt}{3pt}
\multirow{2}{*}{Method} & \multicolumn{6}{c|}{AUROC} & \multicolumn{6}{c}{AUPRC} \\
& NQ & TQA & SQW & SciQ & GSM8K & CQA & NQ & TQA & SQW & SciQ & GSM8K & CQA \\
\midrule

Perplexity      & 0.68 & 0.60 & 0.71 & 0.70 & 0.53 & 0.71 & 0.58 & 0.44 & 0.63 & 0.91 & 0.43 & 0.89 \\
SAR             & 0.80 & 0.62 & 0.73 & 0.74 & 0.51 & 0.60 & 0.71 & 0.44 & 0.65 & 0.91 & 0.40 & 0.83 \\
Semantic        & 0.62 & 0.44 & 0.73 & 0.75 & 0.55 & 0.70 & 0.54 & 0.34 & 0.66 & 0.91 & 0.43 &  0.88\\
Focus           & 0.69 & 0.57 & 0.60 & 0.65 & 0.49 & 0.60 & 0.56 & 0.43 & 0.55 & 0.89 & 0.40 & 0.82 \\
Attention       & 0.56 & 0.58 & 0.47 & 0.53 & 0.47 & 0.52 & 0.46 & 0.42 & 0.44 & 0.85 & 0.35 & 0.76 \\
Eigen           & 0.59 & 0.48 & 0.72 & 0.73 & 0.56 & 0.58 & 0.52 & 0.34 & 0.62 & 0.90 & 0.44 & 0.79 \\
ATRMD         & 0.43 & 0.41 & 0.47 & 0.58 & 0.51 & 0.69 & 0.37 & 0.31 & 0.43 & 0.87 & 0.38 & 0.87 \\
P(True)         & 0.50 & 0.70 & 0.56 & 0.60 & 0.52 & 0.57 & 0.40 & 0.62 & 0.50 & 0.88 & 0.39 & 0.80 \\
Proposed        & \textbf{0.83} & \textbf{0.86} & \textbf{0.82} & \textbf{0.81} & \textbf{0.71} & \textbf{0.72} & \textbf{0.78} & \textbf{0.82} & \textbf{0.81} & \textbf{0.95} & \textbf{0.60} & \textbf{0.90} \\
\bottomrule
\end{tabularx}
\label{table: gemma auroc auprc}
\vspace{-10pt}
\end{table*}
\subsection{Ablation Study on Geometry Signals}
\begin{wrapfigure}[8]{r}{4.7cm}
\centering
\captionsetup{singlelinecheck = false, skip=5pt, justification=justified}
\vspace{-25pt}
  \includegraphics[scale=0.3]{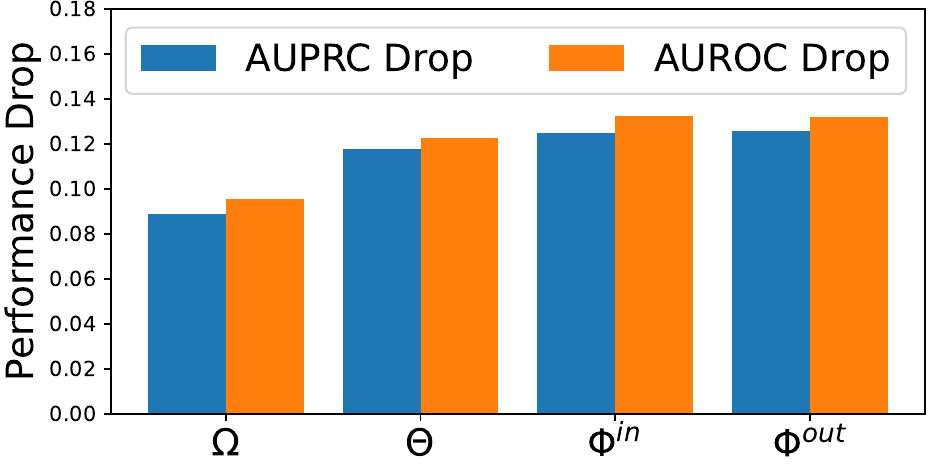}
  \vspace{-15pt}
  \caption{Impact of signal permutation on AUROC and AUPRC.}
  \label{fig: importance} 
\end{wrapfigure}
We conduct an importance analysis of the proposed geometry signals $(\Omega, \Theta, \Phi^{\mathrm{in}}, \Phi^{\mathrm{out}})$ by randomly permuting each component during inference and measuring the resulting drop in AUROC and AUPRC. We report the average performance across all inference models and datasets. Figure~\ref{fig: importance} indicates that perturbing each component leads to comparable degradation, suggesting that they contribute similarly to the overall effectiveness.
\section{Conclusion}
In this work, we introduce Conformal Abstention (CA), which enables language models to selectively abstain when their predictions are unreliable, without requiring additional training. We further developed a representation-geometry-based calibration approach that enhances the alignment between prediction confidence and correctness by quantifying how strongly model knowledge contributes to responses, thereby making abstention decisions more effective at filtering out incorrect predictions.
\newpage
\bibliographystyle{unsrt}
\bibliography{neurips_2026}


\newpage
\appendix

\section{Additional Theoretical Statement}
\subsection{Proof of Lemma~\ref{lem: uq_exchangeable}}\label{appendix: proof of uq_exchangeable}
\begin{proof}
Consider the deterministic mapping
\[
g: ((X_1,Y_1),\dots,(X_{n+1},Y_{n+1})) \;\mapsto\; ((U_1,J_1),\dots,(U_{n+1},J_{n+1})),
\]
where for each $i$, 
\[
U_i = r(f(X_i)\mid X_i), \quad J_i = \xi(X_i,Y_i,f(X_i)),
\] 
and $g$ is the composition of the language model $f$, the uncertainty function $r$, and the correctness evaluation $\xi$.  

Since $(X_1,Y_1),\dots,(X_{n+1},Y_{n+1})$ are exchangeable, for any permutation $\pi$ of $\{1,\dots,n+1\}$,
\[
(X_{\pi(1)},Y_{\pi(1)}),\dots,(X_{\pi(n+1)},Y_{\pi(n+1)}) \stackrel{d}{=} (X_1,Y_1),\dots,(X_{n+1},Y_{n+1}),
\]
where $\stackrel{d}{=}$ indicates distributional equality.
Now consider applying $g$ to each of these pairs. Because $g$ acts independently and deterministically on each input pair, we have
\[
(U_{\pi(1)},J_{\pi(1)}),\dots,(U_{\pi(n+1)},J_{\pi(n+1)}) 
= g((X_{\pi(1)},Y_{\pi(1)}),\dots,(X_{\pi(n+1)},Y_{\pi(n+1)})).
\]

By Theorem~\ref{thm:exchangeability preservation}, if for every permutation $\pi_1$ of the outputs there exists a permutation $\pi_2$ of the inputs such that
\[
\pi_1 g((x_1,y_1),\dots,(x_{n+1},y_{n+1})) = g(\pi_2(x_1,y_1),\dots,\pi_2(x_{n+1},y_{n+1})),
\]
then the output sequence is exchangeable. In our case, we can choose $\pi_2 = \pi_1$ because $g$ is applied independently and deterministically to each input. This satisfies the requirement of the theorem for all possible permutations of outputs.

Therefore, the sequence of uncertainty-correctness pairs $(U_1,J_1),\dots,(U_{n+1},J_{n+1})$ is exchangeable.

\medskip
\noindent
\textbf{Remark:} This result holds for any deterministic uncertainty function $r$, correctness evaluation $\xi$, and natrual language generation model $f$, provided the underlying prompts and ground truths $(X_i,Y_i)$ are exchangeable. The key intuition is that any permutation of the inputs leads to the same permutation in the outputs, preserving exchangeability under deterministic transformations.
\end{proof}
\subsection{Proof of Lemma~\ref{lem: rescale}}\label{appendix: proof of rescale}
\begin{proof}
By definition, the logits are computed as \(o = W_E \cdot \varepsilon\), and the ranking \(\pi\) is determined by the ordering of the entries:
\[
o[\pi(1)] \ge o[\pi(2)] \ge \cdots \ge o[\pi(V)].
\]

Now consider a positive scaling of the embedding, \(\varepsilon' = \lambda \varepsilon\) with \(\lambda > 0\). Then the new logits are
\[
o' = W_E \cdot \varepsilon' = W_E \cdot (\lambda \varepsilon) = \lambda (W_E \cdot \varepsilon) = \lambda o.
\]

Since \(\lambda > 0\), scaling preserves the ordering of the elements of \(o\): for any indices \(i, j\),
\[
o[i] \ge o[j] \implies \lambda o[i] \ge \lambda o[j].
\]

Therefore, the ranking induced by \(o'\) is identical to that induced by \(o\):
\[
o'[\pi(1)] \ge o'[\pi(2)] \ge \cdots \ge o'[\pi(V)].
\]

This completes the proof.
\end{proof}
\subsection{Proof of Lemma~\ref{lem: rotation}}\label{appendix: proof of rotation}
\begin{proof}
We assume the unit token embeddings $\varepsilon, \varepsilon' \in \mathbb{R}^D$ with $\|\varepsilon\|_2 = \|\varepsilon'\|_2 = 1$, and let $\theta$ be the angle between them:
\begin{equation}
    \varepsilon^\top \varepsilon' = \cos\theta.
\end{equation}  

Because $\varepsilon$ and $\varepsilon'$ are unit vectors, we have
\begin{equation}
\begin{split}
\|\varepsilon - \varepsilon'\|_2^2 &= \|\varepsilon\|_2^2 + \|\varepsilon'\|_2^2 - 2 \varepsilon^\top \varepsilon' \\
&= 1 + 1 - 2 \cos\theta = 2(1 - \cos\theta).
\end{split}
\end{equation}

Using the identity $1 - \cos\theta = 2 \sin^2(\theta/2)$, it follows that
\begin{equation}\label{eq:sin}
\|\varepsilon - \varepsilon'\|_2 = 2 \sin(\theta/2).
\end{equation}

Let $o = W_E \cdot\varepsilon$ and $o' = W_E\cdot \varepsilon'$ be the corresponding logits. Then
\begin{equation}
o - o' = W_E (\varepsilon - \varepsilon').
\end{equation}

Applying the spectral norm of $W_E$, we obtain
\begin{equation}
\|o - o'\|_2 \le \|W_E\|_2 \, \|\varepsilon - \varepsilon'\|_2,
\end{equation}
and combining with Eq.~\eqref{eq:sin} gives
\begin{equation}\label{eq:bound_by_sin}
\|o - o'\|_2 \le 2 \|W_E\|_2 \, \sin(\theta/2).
\end{equation}

Define $\gamma := o - o' \in \mathbb{R}^V$ as the change in logits. Let $v_* = \arg\max_v o[v]$ denote the predicted token under $o$. A replacement, i.e., $\arg\max_v o'[v] \neq v_*$, occurs if and only if there exists $v \neq v_*$ such that
\begin{equation}
o'[v] \ge o'[v_*] \iff \gamma[v] - \gamma[v_*] \ge o[v_*] - o[v].
\end{equation}

For any $v \neq v_*$, we have
\begin{equation}
|\gamma[v] - \gamma[v_*]| \le |\gamma[v]| + |\gamma[v_*]| \le 2 \|\gamma\|_\infty.
\end{equation}

Therefore, if
\begin{equation}
2 \|\gamma\|_\infty \le \min_{v \neq v_*} (o[v_*] - o[v]),
\end{equation}
then $\gamma[v] - \gamma[v_*] \le o[v_*] - o[v]$ for all $v \neq v_*$, ensuring no replacement occurs.  

Since $\|\gamma\|_\infty \le \|\gamma\|_2$ and by Eq.~\eqref{eq:bound_by_sin} we have $\|\gamma\|_2 \le 2 \|W_E\|_2 \, \sin(\theta/2)$, a sufficient condition to avoid replacement is
\begin{equation}
2 \|W_E\|_2 \, \sin(\theta/2) \le \min_{v \neq v_*} (o[v_*] - o[v]).
\end{equation}

This completes the proof.
\end{proof} 
\section{Correctness Evaluation}~\label{appendix: correctness eval}
To determine whether a generated response is correct, we move beyond token-level overlap metrics (e.g., BERTScore~\citep{zhang2020bertscoreevaluatingtextgeneration}) and instead adopt a systematic, multi-stage semantic evaluation pipeline leveraging HHEM-2.1-Open~\citep{hhem-2.1-open}, GPT-4o~\cite{openai2024gpt4ocard}, and GPT-5.1~\cite{openai_gpt5}.

First, we convert both the model prediction and the ground-truth answer into natural language statements conditioned on the original question, using Qwen2.5-7B~\citep{qwen2025qwen25technicalreport}. For example, given the question “Who wrote Pride and Prejudice?”, a model prediction such as “It was written by Jane Austen” is reformulated as “The author of Pride and Prejudice is Jane Austen.” The ground-truth answer is transformed into a reference statement with the same structure to ensure comparability. We then compute the semantic similarity between the predicted and reference statements using HHEM-2.1-Open, and classify predictions with similarity scores above 0.8 as correct. This formulation evaluates meaning-level agreement rather than surface-level token overlap, which is essential for recognizing semantically equivalent but lexically diverse answers~\cite{qiu2024semanticdensityuncertaintyquantification}.

However, hallucination detection models such as HHEM-2.1-Open evaluate consistency with respect to a reference rather than absolute correctness, and can therefore misclassify semantically valid responses as incorrect when they are not explicitly supported by the reference~\cite{zhang2026hallucinationdetectionevaluationlarge}. To mitigate this issue, we introduce a second-stage verification step. Predictions initially labeled as incorrect are re-evaluated by GPT-4o and GPT-5.1, which act as independent semantic judges. If both models agree that a response is correct, we reassign the sample to the correct class. This conservative consensus mechanism reduces false negatives while maintaining high precision, ensuring that semantically valid answers are not excluded due to limitations of a single evaluator.

\section{Additional Experiment Results}\label{appendix: additional experiments}
Figures~\ref{fig: llama3B_auc} and~\ref{fig: llama8B_auc} present the conditional correctness–participation trade-off using LLaMA-3.2-3B-Instruct and LLaMA-3-8B-Instruct as the base inference models, respectively. Across participation levels ranging from $1-\alpha=0.1$ to $0.9$, our method consistently achieves higher conditional correctness, indicating that the proposed uncertainty scores more effectively distinguish correct from incorrect responses. Consequently, the abstention mechanism selectively filters out less reliable predictions, resulting in improved accuracy among the retained outputs.

\begin{figure*}[h]
\vspace{-2pt}
\centering
\captionsetup{singlelinecheck = false, skip=5pt, justification=justified}
  \includegraphics[scale=0.5]{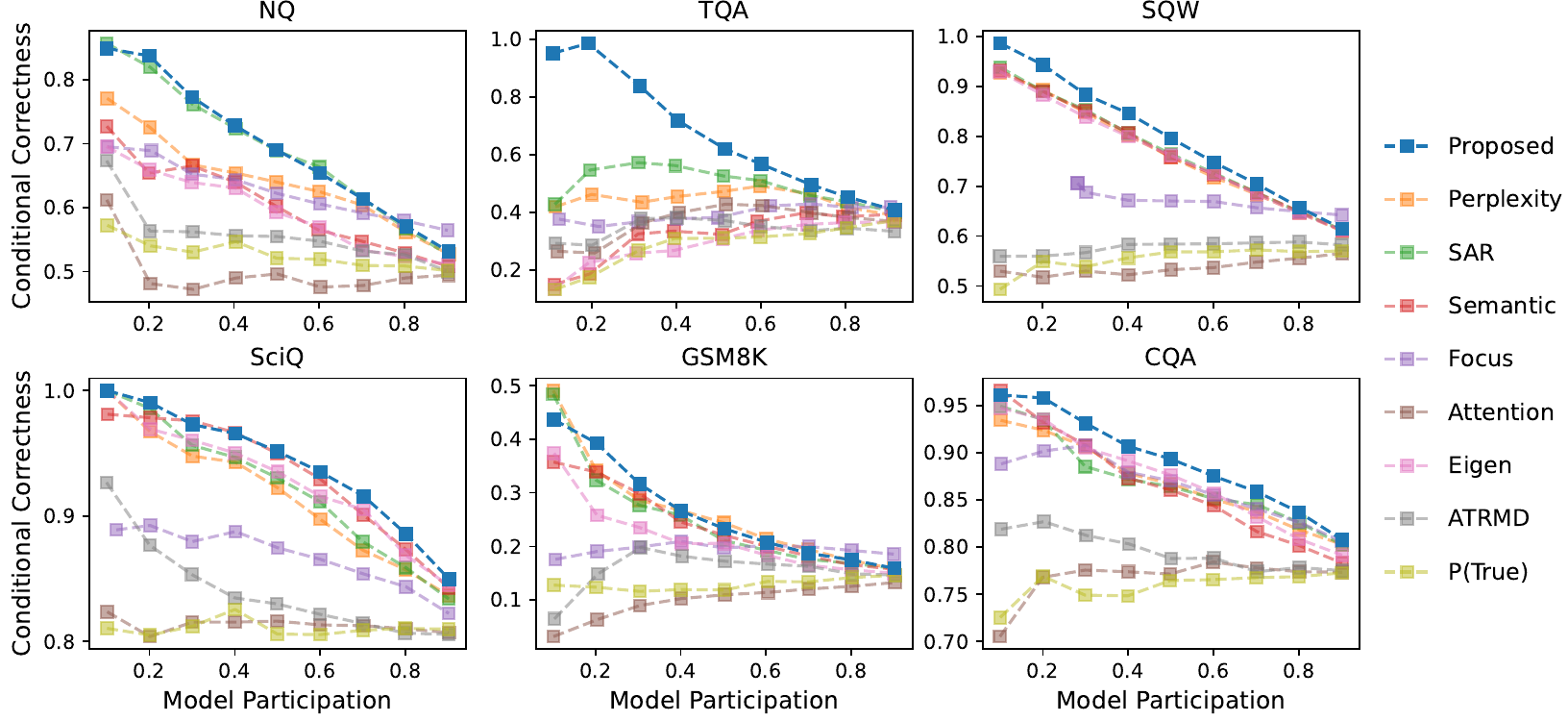}
  \caption{Conditional correctness using different UQ methods as uncertainty scoring functions, with LLaMA-3.2-3B-Instruct as the inference model. We achieve the highest conditional correctness across participation rates.}  \label{fig: llama3B_auc} 
\end{figure*}

\begin{figure*}[h]
\vspace{-2pt}
\centering
\captionsetup{singlelinecheck = false, skip=5pt, justification=justified}
  \includegraphics[scale=0.5]{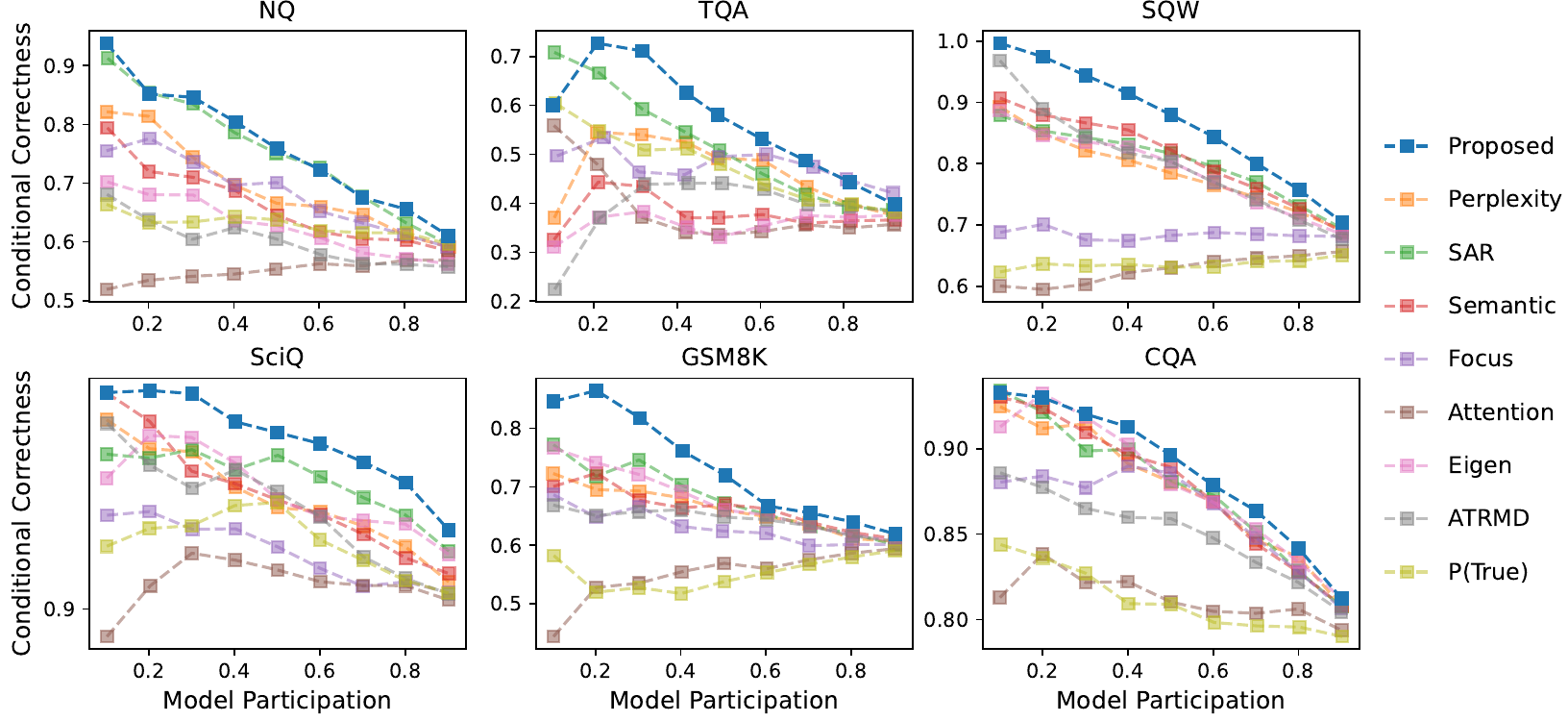}
  \caption{Conditional correctness using different UQ methods as uncertainty scoring functions, with LLaMA-3-8B-Instruct as the inference model. We achieve the highest conditional correctness across participation rates.}  \label{fig: llama8B_auc} 
\end{figure*}

The experimental results for AUROC and AUPRC using LLaMA-3.2-3B-Instruct and LLaMA-3-8B-Instruct are reported in Table~\ref{table: llama3B auroc auprc} and Table~\ref{table: llama8B auroc auprc}, respectively. The proposed method consistently outperforms the baselines across both metrics, demonstrating stronger discriminative capability in separating correct from incorrect predictions. This improvement aligns with the gains observed under the conformal abstention framework, further confirming that the proposed uncertainty measure yields a more reliable ranking of predictions.

\begin{table*}[t]
\centering
\small
\captionsetup{justification=justified}
\caption{AUROC and AUPRC of different uncertainty scores with LLaMA-3.2-3B-Instruct. For each dataset, the highest value is shown in \textbf{bold}.}
\vspace{-5pt}
\setlength{\tabcolsep}{2pt}
\begin{tabularx}{\textwidth}{c|YYYYYY|YYYYYY} 
\specialrule{0.4pt}{0pt}{2pt}
\specialrule{0.4pt}{0pt}{3pt}
\multirow{2}{*}{Method} & \multicolumn{6}{c|}{AUROC} & \multicolumn{6}{c}{AUPRC} \\
& NQ & TQA & SQW & SciQ & GSM8K & CQA & NQ & TQA & SQW & SciQ & GSM8K & CQA \\
\midrule

Perplexity      & 0.71 & 0.64 & 0.76 & 0.75 & 0.77 & 0.69 & 0.67 & 0.51 & 0.81 & 0.92 & 0.40 &  0.87\\
SAR             & \textbf{0.77} & 0.70 & 0.77 & 0.76 & 0.73 & 0.69 & 0.74 & 0.52 & 0.81 & 0.93 & 0.39 &  0.88\\
Semantic & 0.64 & 0.48 & 0.77 & 0.80 & 0.73 & 0.66 & 0.62 & 0.36 & 0.82 & 0.94 & 0.32 &  0.87\\
Focus           & 0.62 & 0.50 & 0.55 & 0.64 & 0.56 & 0.69 & 0.64 & 0.40 & 0.67 & 0.87 & 0.20 &  0.86\\
Attention  & 0.50 & 0.53 & 0.45 & 0.52 & 0.39 & 0.50 & 0.53 & 0.40 & 0.54 & 0.81 & 0.11 &  0.77\\
Eigen      & 0.63 & 0.42 & 0.76 & 0.79 & 0.65 & 0.68 & 0.62 & 0.33 & 0.80 & 0.94 & 0.31 &  0.87\\
ATRMD         & 0.59 & 0.45 & 0.53 & 0.55 & 0.56 & 0.53 & 0.58 & 0.40 & 0.59 & 0.85 & 0.16 & 0.80 \\
P(True)         & 0.55 & 0.40 & 0.49 & 0.51 & 0.46 & 0.47 & 0.55 & 0.32 & 0.55 & 0.81 & 0.14 & 0.76 \\
Proposed        & \textbf{0.77} &\textbf{0.88} & \textbf{0.81} & \textbf{0.82} & \textbf{0.78} & \textbf{0.74} & \textbf{0.75} & \textbf{0.87} & \textbf{0.86} & \textbf{0.95} & \textbf{0.42} & \textbf{0.90}\\
\bottomrule
\end{tabularx}
\label{table: llama3B auroc auprc}
\end{table*}
\begin{table*}[t]
\centering
\small
\captionsetup{justification=justified}
\caption{AUROC and AUPRC of different uncertainty scores with LLaMA-3-8B-Instruct. For each dataset, the highest value is shown in \textbf{bold}.}
\vspace{-5pt}
\setlength{\tabcolsep}{2pt}
\begin{tabularx}{\textwidth}{c|YYYYYY|YYYYYY} 
\specialrule{0.4pt}{0pt}{2pt}
\specialrule{0.4pt}{0pt}{3pt}
\multirow{2}{*}{Method} & \multicolumn{6}{c|}{AUROC} & \multicolumn{6}{c}{AUPRC} \\
& NQ & TQA & SQW & SciQ & GSM8K & CQA & NQ & TQA & SQW & SciQ & GSM8K & CQA \\
\midrule

Perplexity      & 0.67 & 0.66 & 0.70 & 0.67 & 0.60 & 0.68 & 0.73 & 0.51 & 0.80 & 0.95 & 0.67 & 0.88\\
SAR             & 0.75 & 0.67 & 0.74 & 0.72 & 0.62 & 0.68 & 0.80 & 0.60 & 0.82 & 0.95 & 0.70 & 0.88\\
Semantic        & 0.63 & 0.52 & 0.74 & 0.67 & 0.61 & 0.68 & 0.68 & 0.39 & 0.83 & 0.95 & 0.66 & 0.88 \\
Focus           & 0.64 & 0.66 & 0.49 & 0.59 & 0.52 & 0.67 & 0.69 & 0.52 & 0.69 & 0.92 & 0.63 & 0.86 \\
Attention       & 0.49 & 0.50 & 0.47 & 0.53 & 0.46 & 0.54 & 0.57 & 0.45 & 0.63 & 0.91 & 0.55 & 0.81 \\
Eigen           & 0.57 & 0.50 & 0.70 & 0.70 & 0.61 & 0.69 & 0.64 & 0.37 & 0.80 & 0.95 & 0.69 & 0.88 \\
ATRMD         & 0.54 & 0.58 & 0.71 & 0.64 & 0.58 & 0.63 & 0.62 & 0.41 & 0.83 & 0.94 & 0.64 & 0.85 \\
P(True)         & 0.61 & 0.65 & 0.47 & 0.60 & 0.44 & 0.53 & 0.65 & 0.55 & 0.64 & 0.93 & 0.55 & 0.82 \\
Proposed        & \textbf{0.77} & \textbf{0.80} & \textbf{0.83} & \textbf{0.80} & \textbf{0.68} & \textbf{0.71} & \textbf{0.81} & \textbf{0.67} & \textbf{0.91} & \textbf{0.97} & \textbf{0.77} & \textbf{0.90} \\
\bottomrule
\end{tabularx}
\label{table: llama8B auroc auprc}
\end{table*}

\section{Performance of Calibrated Baselines}\label{appendix: effect of calibration}
The use of a calibrator in our framework raises a natural question: are the observed performance gains driven by the proposed uncertainty metrics themselves, or simply by the additional post-processing step? To disentangle these effects, we design an evaluation that explicitly isolates the contribution of calibration. Specifically, we augment baseline methods, whose original formulations do not involve any training, with the same calibration procedure employed in our approach. For each baseline, we report its calibrated output in Table~\ref{table: gemma auroc auprc calibrted baselines}, Table~\ref{table: llama3B auroc auprc calibrated baseline}, and Table~\ref{table: llama8B auroc auprc calibrated baseline}. We also include the performance of our method in these tables to facilitate direct comparison.

The results indicate that calibration does not necessarily improve these baselines; in fact, their calibrated variants often underperform the raw scores, suggesting susceptibility to overfitting. This behavior is consistent with the nature of these methods. These baselines produce a single scalar uncertainty measure that is already designed to exhibit a monotonic relationship with prediction error. As such, their discriminative capacity is largely intrinsic and requires minimal post-hoc adjustment. Introducing a learned calibrator on top of these one-dimensional signals provides little additional information and can instead amplify noise, likely degrading performance.

In contrast, our method benefits from calibration because it operates on richer, higher-dimensional geometry signals whose relationship to correctness is not trivially monotonic, allowing the calibrator to meaningfully refine the decision boundary.

\begin{table}[h]
\centering
\small
\captionsetup{justification=justified}
\caption{AUROC and AUPRC of calibrated baselines compared with the proposed method, using Gemma-3-4B-Instruct as the base inference model. Arrows next to calibrated baseline results indicate whether performance improves or degrades compared to their original versions, while a dash denotes no change. For each dataset, the highest value is highlighted in \textbf{bold}.}
\setlength{\tabcolsep}{2pt}
\begin{tabularx}{\textwidth}{c|YYYYYY|YYYYYY} 
\specialrule{0.4pt}{0pt}{2pt}
\specialrule{0.4pt}{0pt}{3pt}
\multirow{2}{*}{Method} & \multicolumn{6}{c|}{AUROC} & \multicolumn{6}{c}{AUPRC} \\
& NQ & TQA & SQW & SciQ & GSM8K & CQA & NQ & TQA & SQW & SciQ & GSM8K & CQA \\
\midrule

Perplexity      & 0.67 $\textcolor{red}{\downarrow}$ & 0.50 $\textcolor{red}{\downarrow}$& 0.71 --           & 0.71 $\textcolor{green!80!black}{\uparrow}$  & 0.52 $\textcolor{red}{\downarrow}$ & 0.70 $\textcolor{red}{\downarrow}$ & 0.54$\textcolor{red}{\downarrow}$& 0.36$\textcolor{red}{\downarrow}$ & 0.63  --          & 0.91 --          & 0.40 $\textcolor{red}{\downarrow}$& 0.88$\textcolor{red}{\downarrow}$ \\
SAR             & 0.79 $\textcolor{red}{\downarrow}$ & 0.62 --            & 0.74 $\textcolor{green!80!black}{\uparrow}$ & 0.74 --            & 0.48 $\textcolor{red}{\downarrow}$ & 0.70 $\textcolor{green!80!black}{\uparrow}$   & 0.66$\textcolor{red}{\downarrow}$& 0.43$\textcolor{red}{\downarrow}$ & 0.67$\textcolor{green!80!black}{\uparrow}$  & 0.92 $\textcolor{green!80!black}{\uparrow}$   & 0.38 $\textcolor{red}{\downarrow}$& 0.88$\textcolor{green!80!black}{\uparrow}$ \\
Semantic        & 0.64 $\textcolor{green!80!black}{\uparrow}$   & 0.41 $\textcolor{red}{\downarrow}$& 0.73 --           & 0.74 $\textcolor{red}{\downarrow}$& 0.51 $\textcolor{red}{\downarrow}$ & 0.70 --             & 0.52$\textcolor{red}{\downarrow}$& 0.33$\textcolor{red}{\downarrow}$ & 0.67$\textcolor{green!80!black}{\uparrow}$  & 0.91 --             & 0.38 $\textcolor{red}{\downarrow}$& 0.88-- \\
Focus           & 0.69  --            & 0.54 $\textcolor{red}{\downarrow}$& 0.66 $\textcolor{green!80!black}{\uparrow}$ & 0.66 $\textcolor{green!80!black}{\uparrow}$  & 0.52 $\textcolor{green!80!black}{\uparrow}$   & 0.59 $\textcolor{red}{\downarrow}$ & 0.57$\textcolor{green!80!black}{\uparrow}$  & 0.37$\textcolor{red}{\downarrow}$ & 0.60$\textcolor{green!80!black}{\uparrow}$  & 0.88 $\textcolor{red}{\downarrow}$ & 0.39 $\textcolor{red}{\downarrow}$& 0.80$\textcolor{red}{\downarrow}$ \\
Attention       & 0.55 $\textcolor{red}{\downarrow}$ & 0.70 $\textcolor{green!80!black}{\uparrow}$  & 0.54 $\textcolor{green!80!black}{\uparrow}$ & 0.50 $\textcolor{red}{\downarrow}$& 0.54 $\textcolor{green!80!black}{\uparrow}$   & 0.52 --             & 0.44$\textcolor{red}{\downarrow}$& 0.56$\textcolor{green!80!black}{\uparrow}$   & 0.49$\textcolor{green!80!black}{\uparrow}$  & 0.83 $\textcolor{red}{\downarrow}$ & 0.40 $\textcolor{green!80!black}{\uparrow}$  & 0.78 $\textcolor{green!80!black}{\uparrow}$ \\
Eigen           & 0.57 $\textcolor{red}{\downarrow}$ & 0.50 $\textcolor{green!80!black}{\uparrow}$  & 0.74 $\textcolor{green!80!black}{\uparrow}$ & 0.74 $\textcolor{green!80!black}{\uparrow}$  & 0.56--              & 0.61 $\textcolor{red}{\downarrow}$ & 0.50$\textcolor{red}{\downarrow}$& 0.36$\textcolor{green!80!black}{\uparrow}$   & 0.70$\textcolor{green!80!black}{\uparrow}$  & 0.91 $\textcolor{green!80!black}{\uparrow}$   & 0.44 --            & 0.81 $\textcolor{green!80!black}{\uparrow}$ \\
P(True)         & 0.51 $\textcolor{green!80!black}{\uparrow}$   & 0.56 $\textcolor{red}{\downarrow}$& 0.56  --          & 0.50 $\textcolor{red}{\downarrow}$& 0.51 $\textcolor{red}{\downarrow}$ & 0.56 $\textcolor{red}{\downarrow}$ & 0.43$\textcolor{green!80!black}{\uparrow}$  & 0.41$\textcolor{red}{\downarrow}$ & 0.49$\textcolor{red}{\downarrow}$& 0.83$\textcolor{red}{\downarrow}$  & 0.39  --           & 0.79 $\textcolor{red}{\downarrow}$ \\
Proposed        & \textbf{0.83} & \textbf{0.86} & \textbf{0.82} & \textbf{0.81} & \textbf{0.71} & \textbf{0.72} & \textbf{0.78} & \textbf{0.82} & \textbf{0.81} & \textbf{0.95} & \textbf{0.60} & \textbf{0.90} \\
\bottomrule
\end{tabularx}
\label{table: gemma auroc auprc calibrted baselines}
\end{table}

\begin{table}[h]
\centering
\small
\captionsetup{justification=justified}
\caption{AUROC and AUPRC of calibrated baselines compared with the proposed method, using LLaMA-3.2-3B-Instruct as the base inference model. Arrows next to calibrated baseline results indicate whether performance improves or degrades compared to their original versions, while a dash denotes no change. For each dataset, the highest value is highlighted in \textbf{bold}.}
\setlength{\tabcolsep}{2pt}
\begin{tabularx}{\textwidth}{c|YYYYYY|YYYYYY} 
\specialrule{0.4pt}{0pt}{2pt}
\specialrule{0.4pt}{0pt}{3pt}
\multirow{2}{*}{Method} & \multicolumn{6}{c|}{AUROC} & \multicolumn{6}{c}{AUPRC} \\
& NQ & TQA & SQW & SciQ & GSM8K & CQA & NQ & TQA & SQW & SciQ & GSM8K & CQA \\
\midrule

Perplexity      & 0.69 $\textcolor{red}{\downarrow}$ & 0.61 $\textcolor{red}{\downarrow}$ & 0.77 $\textcolor{green!80!black}{\uparrow}$ & 0.75 --         & 0.76$\textcolor{red}{\downarrow}$ & 0.68 $\textcolor{red}{\downarrow}$  & 0.64 $\textcolor{red}{\downarrow}$ & 0.44 $\textcolor{red}{\downarrow}$ & 0.81 --              & 0.92 --          & 0.37 $\textcolor{red}{\downarrow}$ &  0.87 --\\
SAR             & 0.76 $\textcolor{red}{\downarrow}$ & 0.68 $\textcolor{red}{\downarrow}$ & 0.77 --         & 0.76 --         & 0.72$\textcolor{red}{\downarrow}$ & 0.69 --            & 0.75 $\textcolor{green!80!black}{\uparrow}$   & 0.54 $\textcolor{green!80!black}{\uparrow}$ & 0.81 --                & 0.93 --          & 0.37 $\textcolor{red}{\downarrow}$ &  0.87$\textcolor{red}{\downarrow}$\\
Semantic        & 0.65 $\textcolor{green!80!black}{\uparrow}$   & 0.72 $\textcolor{green!80!black}{\uparrow}$   & 0.77 --         & 0.80 --         & 0.73 --          & 0.66 --            & 0.63 $\textcolor{red}{\downarrow}$ & 0.53 $\textcolor{green!80!black}{\uparrow}$ & 0.81 $\textcolor{red}{\downarrow}$ & 0.94 --          & 0.30 $\textcolor{red}{\downarrow}$ &  0.87--\\
Focus           & 0.66 $\textcolor{green!80!black}{\uparrow}$   & 0.55 $\textcolor{green!80!black}{\uparrow}$   & 0.66 $\textcolor{green!80!black}{\uparrow}$ & 0.65 $\textcolor{green!80!black}{\uparrow}$ & 0.58$\textcolor{green!80!black}{\uparrow}$   & 0.69 --            & 0.62 $\textcolor{red}{\downarrow}$ & 0.42 $\textcolor{green!80!black}{\uparrow}$ & 0.68 $\textcolor{green!80!black}{\uparrow}$  & 0.87 --           & 0.17 $\textcolor{red}{\downarrow}$ &  0.87$\textcolor{green!80!black}{\uparrow}$\\
Attention       & 0.51 $\textcolor{red}{\downarrow}$ & 0.57 $\textcolor{green!80!black}{\uparrow}$   & 0.50 $\textcolor{green!80!black}{\uparrow}$ & 0.52 --         & 0.59$\textcolor{green!80!black}{\uparrow}$   & 0.53 $\textcolor{green!80!black}{\uparrow}$    & 0.52 $\textcolor{red}{\downarrow}$ & 0.43 $\textcolor{green!80!black}{\uparrow}$& 0.57 $\textcolor{green!80!black}{\uparrow}$  & 0.81 --           & 0.18 $\textcolor{green!80!black}{\uparrow}$ &  0.79 $\textcolor{green!80!black}{\uparrow}$\\
Eigen           & 0.63 --           & 0.49 $\textcolor{green!80!black}{\uparrow}$   & 0.76 --         & 0.78$\textcolor{red}{\downarrow}$& 0.66$\textcolor{green!80!black}{\uparrow}$   & 0.68 --            & 0.59 $\textcolor{red}{\downarrow}$& 0.39 $\textcolor{green!80!black}{\uparrow}$ & 0.81 $\textcolor{green!80!black}{\uparrow}$  & 0.93 $\textcolor{red}{\downarrow}$ & 0.31 $\textcolor{red}{\downarrow}$ &  0.87--\\
P(True)         & 0.53 $\textcolor{red}{\downarrow}$ & 0.58 $\textcolor{green!80!black}{\uparrow}$   & 0.50 $\textcolor{green!80!black}{\uparrow}$ & 0.49$\textcolor{red}{\downarrow}$& 0.50$\textcolor{green!80!black}{\uparrow}$   & 0.50 $\textcolor{green!80!black}{\uparrow}$    & 0.53 $\textcolor{red}{\downarrow}$ & 0.42 $\textcolor{green!80!black}{\uparrow}$ & 0.57 $\textcolor{green!80!black}{\uparrow}$  & 0.80 $\textcolor{red}{\downarrow}$ & 0.15 $\textcolor{green!80!black}{\uparrow}$ & 0.77 $\textcolor{green!80!black}{\uparrow}$ \\
Proposed        & \textbf{0.77}              & \textbf{0.88}              & \textbf{0.81}            & \textbf{0.82}            & \textbf{0.78}             & \textbf{0.74} & \textbf{0.75} & \textbf{0.87} & \textbf{0.86} & \textbf{0.95} & \textbf{0.42} &  \textbf{0.90}\\
\bottomrule
\end{tabularx}
\label{table: llama3B auroc auprc calibrated baseline}
\end{table}

\begin{table}[h]
\centering
\small
\captionsetup{justification=justified}
\caption{AUROC and AUPRC of calibrated baselines compared with the proposed method, using LLaMA-3-8B-Instruct as the base inference model. Arrows next to calibrated baseline results indicate whether performance improves or degrades compared to their original versions, while a dash denotes no change. For each dataset, the highest value is highlighted in \textbf{bold}.}
\setlength{\tabcolsep}{2pt}
\begin{tabularx}{\textwidth}{c|YYYYYY|YYYYYY} 
\specialrule{0.4pt}{0pt}{2pt}
\specialrule{0.4pt}{0pt}{3pt}
\multirow{2}{*}{Method} & \multicolumn{6}{c|}{AUROC} & \multicolumn{6}{c}{AUPRC} \\
& NQ & TQA & SQW & SciQ & GSM8K & CQA & NQ & TQA & SQW & SciQ & GSM8K & CQA \\
\midrule

Perplexity      & 0.66 $\textcolor{red}{\downarrow}$ & 0.52 $\textcolor{red}{\downarrow}$ & 0.70 --         & 0.67 --           & 0.50 $\textcolor{red}{\downarrow}$& 0.68 --          & 0.70$\textcolor{red}{\downarrow}$ & 0.48$\textcolor{red}{\downarrow}$ & 0.80 --                   & 0.94 $\textcolor{red}{\downarrow}$ & 0.59 $\textcolor{red}{\downarrow}$ &  0.87$\textcolor{red}{\downarrow}$\\
SAR             & 0.74 $\textcolor{red}{\downarrow}$ & 0.63 $\textcolor{red}{\downarrow}$ & 0.74 --         & 0.72 --           & 0.67 $\textcolor{green!80!black}{\uparrow}$  & 0.68 --          & 0.77$\textcolor{red}{\downarrow}$ & 0.51$\textcolor{red}{\downarrow}$ & 0.82 --                   & 0.95 --           & 0.63 $\textcolor{red}{\downarrow}$ &  0.89$\textcolor{green!80!black}{\uparrow}$\\
Semantic        & 0.62 $\textcolor{red}{\downarrow}$ & 0.44 $\textcolor{red}{\downarrow}$ & 0.74 --         & 0.65 $\textcolor{red}{\downarrow}$ & 0.59 $\textcolor{red}{\downarrow}$& 0.68 --          & 0.67$\textcolor{red}{\downarrow}$ & 0.33$\textcolor{red}{\downarrow}$ & 0.83 --                   & 0.95 --           & 0.66 --           & 0.89 $\textcolor{green!80!black}{\uparrow}$ \\
Focus           & 0.66 $\textcolor{green!80!black}{\uparrow}$   & 0.61 $\textcolor{red}{\downarrow}$ & 0.57 $\textcolor{green!80!black}{\uparrow}$ & 0.58 $\textcolor{red}{\downarrow}$ & 0.55 $\textcolor{green!80!black}{\uparrow}$  & 0.66 $\textcolor{red}{\downarrow}$& 0.69--           & 0.47$\textcolor{red}{\downarrow}$ & 0.70$\textcolor{green!80!black}{\uparrow}$  & 0.93 $\textcolor{green!80!black}{\uparrow}$   & 0.62 $\textcolor{red}{\downarrow}$ & 0.86 --\\
Attention       & 0.49 --           & 0.46 $\textcolor{red}{\downarrow}$ & 0.53 $\textcolor{green!80!black}{\uparrow}$ & 0.54 $\textcolor{green!80!black}{\uparrow}$   & 0.53 $\textcolor{green!80!black}{\uparrow}$  & 0.50 $\textcolor{red}{\downarrow}$& 0.56$\textcolor{red}{\downarrow}$ & 0.35$\textcolor{red}{\downarrow}$ & 0.67 $\textcolor{green!80!black}{\uparrow}$      & 0.91 --           & 0.61 $\textcolor{green!80!black}{\uparrow}$ & 0.79 $\textcolor{red}{\downarrow}$ \\
Eigen           & 0.58 $\textcolor{green!80!black}{\uparrow}$   & 0.51 $\textcolor{green!80!black}{\uparrow}$   & 0.71 $\textcolor{green!80!black}{\uparrow}$ & 0.69 $\textcolor{red}{\downarrow}$ & 0.60 $\textcolor{red}{\downarrow}$& 0.68 $\textcolor{red}{\downarrow}$& 0.62$\textcolor{red}{\downarrow}$ & 0.36$\textcolor{red}{\downarrow}$ & 0.81 $\textcolor{green!80!black}{\uparrow}$      & 0.95 --           & 0.68 $\textcolor{red}{\downarrow}$ & 0.89 $\textcolor{green!80!black}{\uparrow}$ \\
P(True)         & 0.61 --           & 0.50 $\textcolor{red}{\downarrow}$ & 0.53 $\textcolor{green!80!black}{\uparrow}$ & 0.56 $\textcolor{red}{\downarrow}$ & 0.52 $\textcolor{green!80!black}{\uparrow}$  & 0.55 $\textcolor{green!80!black}{\uparrow}$  & 0.53$\textcolor{red}{\downarrow}$ & 0.37$\textcolor{red}{\downarrow}$ & 0.68 $\textcolor{green!80!black}{\uparrow}$      & 0.91 $\textcolor{red}{\downarrow}$ & 0.61 $\textcolor{green!80!black}{\uparrow}$ & 0.82 --\\
Proposed        & \textbf{0.77} & \textbf{0.80} & \textbf{0.83} & \textbf{0.80} & \textbf{0.68} & \textbf{0.71} & \textbf{0.81} & \textbf{0.67} & \textbf{0.91} & \textbf{0.97} & \textbf{0.77} & \textbf{0.90} \\
\bottomrule
\end{tabularx}
\label{table: llama8B auroc auprc calibrated baseline}
\end{table}
\clearpage
\section{Memory and Computational Complexity}
Let $T$ denote the sequence length, $d$ the hidden dimension, and $L$ the number of Transformer blocks. Since our framework extracts features from both attention and MLP modules, the total number of considered layers is $2L-1$.

\paragraph{Memory Complexity.}
At each layer $l$, we construct two types of contribution matrices. The MLP contribution forms a diagonal $T \times T$ matrix $C_{\mathrm{MLP}}^l$, which requires $\mathcal{O}(T)$ memory, while the attention contribution $C_{\mathrm{Attn}}^l$ corresponds to a lower-triangular $T \times T$ matrix, requiring $\mathcal{O}(T^2)$ memory. Therefore, the per-layer memory cost is dominated by $\mathcal{O}(T^2)$, and across layers becomes $\mathcal{O}(L T^2)$.

In addition, we only retain hidden states associated with the final input token across layers, resulting in a memory cost of $\mathcal{O}(L d)$. Other derived quantities, such as embedding rotations and angular features, are also computed on the final token and incur at most $\mathcal{O}(L d)$ memory.

Combining these terms, the total memory complexity is
\begin{equation}
\mathcal{O}(L T^2 + L d).
\end{equation}

\textbf{Computational complexity.}
The dominant cost arises from computing the proximity function
\begin{equation}
\mathrm{prox}(z, z_i) = \frac{\max\left(0, \|z\|_1 - \|z - z_i\|_1\right)}{\sum_{j=1}^{T} \max\left(0, \|z\|_1 - \|z - z_j\|_1\right)},
\end{equation}
where $z, z_i \in \mathbb{R}^d$. Computing $\|z - z_i\|_1$ requires $\mathcal{O}(d)$ operations. For each layer, the attention contribution involves $\mathcal{O}(T^2)$ token pairs, resulting in $\mathcal{O}(T^2 d)$ time. The MLP contribution only involves diagonal terms and is therefore $\mathcal{O}(T d)$, which is negligible compared to the attention term.

Other computations, including embedding rotation and angular distances to in-distribution and out-of-distribution directions, scale as $\mathcal{O}(T d)$ per layer. Therefore, the overall computational complexity is dominated by the pairwise proximity computation:
\begin{equation}
\mathcal{O}(L T^2 d).
\end{equation}

\textbf{Summary.}
Overall, the method scales quadratically with sequence length and linearly with both hidden dimension and number of layers. The pairwise proximity computation constitutes the primary computational bottleneck.

\end{document}